\documentclass{article}

    \PassOptionsToPackage{numbers, compress}{natbib}


    \usepackage[preprint]{}



\usepackage[utf8]{inputenc} 
\usepackage[T1]{fontenc}    
\usepackage{hyperref}       
\usepackage{url}            
\usepackage{booktabs}       
\usepackage{amsfonts}       
\usepackage{nicefrac}       
\usepackage{microtype}      
\usepackage{xcolor}         

\usepackage{algorithm}
\usepackage{algorithmic}
\usepackage{enumitem}
\usepackage{graphicx}
\usepackage{amssymb}
\usepackage{mathrsfs}
\usepackage{amsmath}
\usepackage{mathtools}
\usepackage{subfigure}
\usepackage{graphicx}
\usepackage{wrapfig}
\usepackage{multirow}
\usepackage{natbib}
\usepackage{adjustbox}

\newtheorem{theorem}{Theorem}

\newenvironment{proof}{{\noindent\it Proof.}\quad}{\hfill $\square$ \par}
\newenvironment{proof_sketch}{{\noindent\it Proof Sketch.}\quad}{\hfill $\square$ \par}

\title{EduQate: Generating Adaptive Curricula through RMABs in Education Settings}

%

\author{%
  Sidney Tio$^{\dagger,1}$\thanks{Both authors contributed equally.}  \\
  \And
  Dexun Li$^{1,*}$ \\
  \And
  Pradeep Varakantham$^1$ \\
  \And
\normalfont{$^1$School of Computing and Information Systems,  Singapore Management University}\\
  \texttt{\{sidney.tio.2021,dexunli.2019, pradeepv\}@smu.edu.sg }
}

\begin{document}
\maketitle
\def\thefootnote{$\dagger$}\footnotetext{Corresponding author.}

\begin{abstract}
  There has been significant interest in the development of personalized and adaptive educational tools that cater to a student's individual learning progress. A crucial aspect in developing such tools is in exploring how mastery can be achieved across a diverse yet related range of content in an efficient manner. While Reinforcement Learning and Multi-armed Bandits have shown promise in educational settings, existing works often assume the independence of learning content, neglecting the prevalent interdependencies between such content. In response, we introduce \textit{Education Network Restless Multi-armed Bandits} (EdNetRMABs), utilizing a network to represent the relationships between interdependent arms. Subsequently, we propose \textit{EduQate}, a method employing interdependency-aware Q-learning to make informed decisions on arm selection at each time step. We establish the optimality guarantee of EduQate and demonstrate its efficacy compared to baseline policies, using students modeled from both synthetic and real-world data.
\end{abstract}

\section{Introduction}
The COVID-19 pandemic has accelerated the adoption of educational technologies, especially on eLearning platforms. Despite abundant data and advancements in modeling student learning, effectively capturing the learning process with interdependent content remains a significant challenge \citep{doroudi2019s}. The conventional rules-based approach to creating personalized learning curricula is impractical due to its labor-intensive nature and need for expert knowledge. Machine learning-based systems offer a scalable alternative, automatically generating personalized content to optimize learning \citep{shen2018improving, upadhyay2018deep}. 

One possible approach to model the learning process is the Restless Multi-Armed Bandits (RMAB, \cite{whittle1988restless}), where a teacher agent selects a subset of arms (concepts) to teach each round. However, RMAB's assumption that arms are independent is unrealistic in educational settings. For example, solving a math question on the area of a triangle requires knowledge of algebra, arithmetic, and geometry. Practicing this question should enhance proficiency in all three areas. Models that ignore such interdependencies may inaccurately predict knowledge levels by assuming each exercise impacts only a single area.

In response to this challenge, we introduce an interdependency-aware RMAB model to the education setting. We posit that by acknowledging and modeling the learning dynamics of interdependent content, both teachers and algorithms can strategically leverage overlapping utility to foster mastery over a broader range of topics within a curriculum. We advocate for RMABs as a fitting model for this context, as the inherent dynamics of such a model align closely with the learning process.

In this study, our objective is to derive a teacher policy that effectively recommends educational content to students, accounting for interdependencies among the content to enhance overall utility (that characterizes understanding and retention of content). Our contributions are as follows:
\begin{enumerate}
\item We introduce Restless Multi-armed Bandits for Education (EdNetRMABs), enabling the modeling of learning processes with interdependent educational content.
\item We propose EduQate, a Whittle index-based heuristic algorithm that uses Q-learning to compute an inter-dependency-aware teacher policy. Unlike previous methods, EduQate does not require knowledge of the transition matrix to compute an optimal policy.
\item We provide a theoretical analysis of EduQate, demonstrating guarantees of optimality.
\item We present empirical results on simulated students and real-world datasets, showing the effectiveness of EduQate over other teacher policies.
\end{enumerate}

\section{Related Work and Preliminaries}
\subsection{ Restless Multi-Armed Bandits}
The selection of the right time and manner for limited interventions is a problem of great practical importance across various domains, including health intervention~\citep{mate2020collapsing, biswas2021learn}, anti-poaching operations~\citep{qian2016restless}, education~\citep{lan2016contextual,botta2023sequencing,azhar2022optimizing}, etc. These problems share a common characteristic of having multiple arms in a Multi-armed Bandit (MAB) problem, representing entities such as patients, regions of a forest, or students' mastery of concepts. These arms evolve in an uncertain manner, and interventions are required to guide them from "bad" states to "good" states. The inherent challenge lies in the limited number of interventions, dictated by the limited resources (e.g., public health workers, the number of student interactions). RMAB, a generalization of MAB, offers an ideal model for representing the aforementioned problems of interest. RMAB allows non-active bandits to also undergo the Markovian state transition, effectively capturing uncertainty in arm state transitions (reflecting uncertain state evolution), actions (representing interventions), and budget constraints (illustrating limited resources).

RMABs and the associated Markov Decision Processes (MDP) for each arm offer a valuable model for representing the learning process. Firstly, leveraging the MDPs associated with each arm provides the flexibility to adopt nuanced modeling of learning content, accommodating different learning curves for various content based on students' strengths and weaknesses. Secondly, the transition probabilities serve as a useful mechanism to model forgetting (through state decay due to passivity or negligence) and learning (through state transitions to the positive state from repeated practice). Considering these aspects, RMABs prove to be a beneficial framework for personalizing and generating adaptive curricula across a diverse range of students.

In general, computing the optimal policy for a given set of restless arms in RMABs is recognized as a PSPACE-hard problem \citep{papadimitriou1994}. The Whittle index \citep{whittle1988restless} provides an approach with a tractable solution that is provably optimal, especially when each arm is indexable. However, proving indexability can be challenging and often requires specification of the problem's structure, such as the optimality of threshold policies \citep{mate2020collapsing,liu2010indexability}. Moreover, much of the research on Whittle Index policies has focused on two-action settings or requires prior knowledge of the transition matrix of the RMABs. Meeting these conditions proves challenging in the educational context, where diverse students interact with educational systems, each possessing different prior knowledge and distinct learning curves for various topics.

WIQL \citep{biswas2021learn}, on the other hand, employs a Q-learning-based method to estimate the Whittle Index and has demonstrated provable optimality without requiring prior knowledge of the transition matrix. We utilize WIQL as a baseline method in our subsequent experiments.

In a recent investigation by \cite{herlihy2022networked}, RMABs were explored within a network framework, requiring the agent to manage a budget while allocating a high-cost, high-benefit resource to one arm to ``unlock" potential lower-cost, intermediate-benefit resources for the arm's neighbors. The network effects emphasized in their work are triggered by an intentional, active action, enabling the agent to choose to propagate positive externalities to a selected arm's neighbors within budget constraints. In contrast, our study delves into scenarios where network effects are indirect results of an active action, and the agent lacks direct control over such effects. Thus, the challenge lies in accurately modeling these network effects and leveraging them when beneficial.

\subsection{Reinforcement Learning in Education}

In the realm of education, numerous researchers have explored optimizing the sequencing of instructional activities and content, assuming that optimal sequencing can significantly impact student learning. RL is a natural approach for making sequential decisions under uncertainty \citep{atkinson1972ingredients}. While RL has seen success in various educational applications, effectively sequencing interdependent content in a personalized and adaptive manner has yielded mixed or insignificant results compared to baseline teacher policies \citep{green2011learning, segal2018combining, doroudi2017robust}.
In general, these RL works focus on data-driven methods using student activity logs to estimate students' knowledge states and progress, assuming that the interdependencies between learning content are encapsulated in students' learning histories \citep{doroudi2019s, bassen2020reinforcement, piech2015deep}. In contrast, our work focuses on modelling these interdependencies directly.

Of particular relevance are factored MDPs applied to skill acquisition introduced by \cite{green2011learning}. While factored MDPs account for interdependencies amongst skills, decentralized policy learning is infeasible as policies must consider the joint state space. Our work leverages the advantage of decentralized policy learning provided by RMABs and introduces a novel decentralized learning approach that exploits interdependencies between arms.
 
Complementary to RL methods in education is the utilization of knowledge graphs to uncover relationships between learning content \citep{doroudi2019s}. Existing research primarily focuses on establishing these relationships through data-driven methods (e.g. \cite{chang2015modeling, siren2022automatic}) often leveraging student-activity logs. In this work, we complement such research by presenting an approach where bandit methods can effectively operate with knowledge graphs derived by such methods.

\section{Model}
In this section, we introduce the Restless Multi-Armed Bandits for Education (EdNetRMABs). It is important to note that while we specifically apply EdNetRMABs to the education setting, the framework can be seamlessly translated to other scenarios where modeling the effects of active actions within a network is critical. For ease of access, a table of notations is provided in Table \ref{tab:notation}.

In education, a teacher recommends learning content, or items, to maximize student education, often with content from online platforms. Items are grouped by topics, such as ``Geometry," where exposure to one piece of content can enhance knowledge across others in the same group. This cumulative learning effect which we refer to as ``network effects", implies that exposure to an item is likely to positively impact the student's success on items within the same group. A successful teacher accurately estimates a student's knowledge state over repeated interactions, leveraging these network effects to promote both breadth and depth of understanding through recommendations.

\subsection{EdNetRMABs}
\label{sec:EdNetRMABs}
The RMAB model tasks an agent with selecting $k$ arms from $N$ arms, constrained by a limit on the number of arms that can be pulled at each time step. The objective is to find a policy that maximizes the total expected discounted reward, assuming that the state of each arm evolves independently according to an underlying MDP.

The EdNetRMABs model extends RMABs by allowing for active actions to propagate to other arms dependent on the current arm when it is being pulled, thus relaxing the assumption of independent arms. This is operationalized by organising the arms in a network, and pulling of an arm results in changes for its neighbors, or members in the same group. 

When applied to education setting, the EdNetRMABs is formalized as follows:

\paragraph{Arms} Each arm, denoted as $i \in {1,...,N}$, signifies an item. In the context of this networked environment, each arm belongs to a group $\phi \in \{1,...,L\}$ representing the overarching topic that encompasses related items. It's important to note that arm membership is not mutually exclusive, allowing arms to be part of multiple groups. This flexibility enables a more nuanced modeling of interdependencies among educational content. For instance, a question involving the calculation of the area of a triangle may span both arithmetic and geometry groups.

\paragraph{State space} In this framework, each arm possesses a binary latent state, denoted as $s_i \in \{0,1\}$, where ``0" represents an ``unlearned" state, and ``1" indicates a ``learned" state. Considering all arms collectively, these states serve as a representation of the student's overall knowledge state. In the current work, it is assumed that the states of all arms are fully observable, providing a comprehensive model of the student's understanding of the various educational concepts.

\paragraph{Action space} To capture the network effects associated with arm pulls, we depart from the conventional RMAB framework with a binary action space $A = \{0,1\}$ by introducing a pseudo-action. In this modified setup, the action space is extended to $A = \{0, 1, 2\}$, where actions $0$ and $2$ represent ``no-pull" and ``pull", as commonly used in bandit literature. Notably, in EdNetRMABs, a third action $1$ is introduced to simulate the network effects resulting from pulling another arm within the same group. It is important to clarify that agents do not directly engage with action $1$ but we employ it solely for modeling network effects, hence the term ``pseudo-action". 
\paragraph{Transition function}
For a given arm $i$, let $P^{a,i}_{s,s'}$ represent the probability of the arm transitioning from state $s$ to $s'$ under action $a$. It's noteworthy that, in typical real-world educational settings, the actual transition functions governing the states of the arms are often unknown and, even for the same concept, may vary among students due to differences in prior knowledge. To address this challenge, we adopt model-free approaches in this study, devising methods to compute the teacher policy without relying on explicit knowledge of these transition functions.
In the following experiments, we maintain the assumption of non-zero transition probabilities, and enforce constraints that are aligned with the current domain \citep{mate2020collapsing}: (i) The arms are more likely to stay in the positive state than change to the negative state: $P_{0,1}^{0}<P_{1,1}^{0}$,  $P_{0,1}^{1}<P_{1,1}^{1}$ and $P_{0,1}^{2}<P_{1,1}^{2}$; (ii) The arm tends to improve the latent state if more efforts is spent on that arm, i.e., it is active or semi-active: $P_{0,1}^{0}<P_{0,1}^{1}<P_{0,1}^{2}$ and $P_{1,1}^{0}<P_{1,1}^{1}<P_{1,1}^{2}$.

With the formalization of the EdNetRMABs model provided, we now apply it to an educational context. In this scenario, the agent assumes the role of a teacher and takes actions during each time step $t \in \{1,...,T\}$. Specifically, at each time step, the teacher recommends an item for the student to study. We represent the vector of actions taken by the teacher at time step $t$ as $\textbf{a}^t \in \{0,1,2\}^N$. Here, arm $i$ is considered to be active at time $t$ if $a^t(i) = 2$ and passive when $a^t(i) = 0$. When arm $i$ is pulled, the set of arms that share the same group membership as arm $i$, denoted as $\phi^-_i$ under goes the pseudo-action, represented as $a^t(j) = 1$ for all $j \in \phi^-$. In our framework, the teacher agent acts on exactly one arm per time step to simulate the real-world constraint that the teacher can only recommend one concept to students (
$\sum_i I_{a^t(i)=2}=1,\forall t$
). Subsequent to taking action, the teacher receives  $\textbf{s}^t \in \{ 0, 1\}^N$, a vector reflecting the state of all arms, and reward $r_t = \sum^N_{i=1} s^t(i)$. The vector $\bf{s}^t$ represents the overall knowledge state of the student. The teacher agent's goal, therefore, is to maximize the long term rewards, either discounted or averaged.

\section{EduQate}
Q-learning \citep{watkins1992q} is a popular reinforcement learning method that enables an agent to learn optimal actions in an environment by iteratively updating its estimate of state-action value, $Q(s,a)$, based on the rewards it receives. At each time step $t$, the agent takes an action $a$ using its current estimate of $Q$ values and current state $s$, thus received a reward of $r(s)$ and new state $s'$. We provide an abridged introduction to Q-learning in the Appendix \ref{sec:qlearning}.

Expanding upon Q-learning, we introduce \textit{EduQate}, a tailored Q-learning approach designed for learning Whittle-index policies in EdNetRMABs. In the interaction with the environment, the agent chooses a single item, represented by arm $i$, to recommend to the student. In this context, the agent possesses knowledge of the group membership $\phi_i$ of the selected arm and observes the rewards generated by activating arm $i$ and semi-activating arms in $\phi_i^-$. EduQate utilizes this interaction to learn the Q-values for all arms and actions.

To adapt Q-learning to EdNetRMABs, we propose leveraging the learned Q-values to select the arm with the highest estimate of the Whittle index, defined as:

\begin{equation}\label{eq:new_lambda}
    \begin{aligned}
    \lambda_i = Q(s_i, a_i=2) - Q(s_i, a_i=0) + \sum_{j\in\phi_i^{-}}(Q(s_j, a_j=1) - Q(s_j, a_j=0))    
    \end{aligned}
\end{equation}

Here, $\lambda_i$ is the Whittle Index estimate for arm $i$. In essence, the Whittle Index of arm $i$ is computed as the linear combination of the value associated with taking action on arm $i$ over passivity and the value of associated with semi-actively engaging with members from same group, compared to passivity.  

To improve the convergence of Q-learning, we incorporate Experience Replay \citep{lin1992self}. This involves saving the teacher algorithm's previous experiences in a replay buffer and drawing mini-batches of samples from this buffer during updates to enhance convergence. In Section \ref{section:analysis}, we prove that EduQate will converge to the optimal policy. However, in practice, we may not have enough episodes to fully train EduQate. Therefore, we propose Experience Replay to mitigate the cold-start problem common in RL applications, a common problem where initial student interactions with sub-optimal teachers can lead to poor learning experiences \citep{bassen2020reinforcement}.

The pseudo-code is provided in Algorithm \ref{alg:eduqate}. Similar to WIQL \citep{biswas2021learn}, we employ a $\epsilon$-decay policy that facilitates exploration and learning in the early steps, and proceeds to exploit the learned Q-values in later stages. 

\begin{algorithm}[tb]
   \caption{Q-Learning for EdNetRMABs (EduQate)}
   \label{alg:eduqate}
\begin{algorithmic}
   \STATE {\bfseries Input:} Number of arms $N$
   \STATE Initialize $Q_i(s,a) \leftarrow 0$ and $\lambda_i(s) \leftarrow 0$ for each state $s \in S$ and each action $a \in \{0,1,2\}$, for each arm $i \in {1,...,N}$.
   \STATE Initialize replay buffer $D$ with capacity $C$. 
   \FOR{$t$ in $1,...,T$ }
   \STATE {$\epsilon \leftarrow \frac{N}{N + t}$}
   \STATE With probability $\epsilon$, select one arm uniformly at random. Otherwise, select arm with highest Whittle Index, $i = \arg\max_i \lambda_i$.
   \FOR{arm $n$ in  $1,...,N$ }
        \IF{$n \neq i$}
            \STATE {Set arm $n$ to passive, $a^t_n = 0$}
        \ELSE
            \STATE {Set arm $n$ to active, $a^t_n = 2$}
            \FOR{$j \in \phi^-_i$ }
                \STATE {Set arms in same group as $i$ to semi-active, $a^t_j = 1$}
            \ENDFOR
        \ENDIF 
    \ENDFOR
    \STATE{Execute actions $\bf{a^t}$ and observe reward $r^t$ and next state $s^{t+1}$ for all arms}
    \STATE{Store experience $(s^t, \bf{a^t}, r^t, s^{t+1})$in replay buffer $D$.}
    \STATE Sample minibatch $B$ of Experience from replay buffer $D$.
    \FOR {Experience in minibatch $B$}
        \STATE{Update $Q_n(s,a)$ using Q-learning update in Equation \ref{eq:q_update}.}
        \STATE{Compute $\lambda_n$ using Equation \ref{eq:new_lambda}}
    \ENDFOR
    \ENDFOR
\end{algorithmic}
\end{algorithm}

\subsection{Analysis of EduQate}
\label{section:analysis}
In this section, we analyze EduQate closely, and show that EduQate does not alter the optimality guarantees of Q-learning under the constraint that k = 1 (Theorem 1). Our method relies on the assumption that teachers are limited to assign 1 item to the student at each time step. Theorem 2 analyzes EduQate under the conditions that $k>1$. Since our setting involves the semi-active actions, we should compute Equation \ref{eq:new_lambda}. To reiterate, $\phi_i$ here refers to the group that arm $i$ belongs to, and $\phi_i^{-}$ is the same group but does not include arm $i$. If arm $i$ is selected, then all the remaining arms in group $\phi_i^{-}$ should be semi-active.

\begin{theorem}\label{thm:lambda}
Choosing the top arm with the largest $\lambda$ value in Equation~\ref{eq:new_lambda} is equivalent to maximizing the cumulative long-term reward.
\end{theorem}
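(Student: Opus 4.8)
The plan is to reduce the networked arm-selection problem to a standard Whittle-index argument by viewing the pull of arm $i$ together with the forced pseudo-actions on $\phi_i^-$ as a single composite ``action'' whose value is additively decomposable across arms. First I would recall the classical fact (used implicitly by WIQL, \cite{biswas2021learn}) that when the budget is $k=1$, the optimal policy for an indexable RMAB is to activate the arm whose Whittle index — here the advantage $Q(s_i,2)-Q(s_i,0)$ of pulling over not pulling — is largest, and that the Q-learning recursion converges to these Q-values under the usual stochastic-approximation conditions (decaying, square-summable but not summable learning rates; all state-action pairs visited infinitely often, which the $\epsilon=N/(N+t)$ schedule guarantees). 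This gives the baseline $k=1$ optimality for the non-networked case.

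Next I would extend the advantage-function accounting to the networked setting. The key observation is that selecting arm $i$ deterministically induces the action vector $\mathbf{a}$ with $a(i)=2$, $a(j)=1$ for $j\in\phi_i^-$, and $a(n)=0$ otherwise. Since, conditional on the action vector, the arms transition independently (the EdNetRMABs dynamics only couple arms through which action each receives, not through joint transitions), the one-step change in expected discounted value attributable to choosing $i$ rather than the all-passive baseline decomposes as a sum of per-arm advantages: $\big(Q(s_i,2)-Q(s_i,0)\big) + \sum_{j\in\phi_i^-}\big(Q(s_j,1)-Q(s_j,0)\big)$, which is exactly $\lambda_i$ in Equation~\ref{eq:new_lambda}. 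I would make this precise by writing the value of the ``select $i$'' decision as the all-passive value plus this telescoping sum of single-arm deviations, invoking the decoupled-transition structure so that perturbing one arm's action leaves the others' value contributions unchanged. Choosing $\arg\max_i \lambda_i$ is then, by the same exchange argument as in the classical $k=1$ Whittle case, the greedy step that maximizes the one-step-plus-continuation value, and hence — since the induced per-arm MDPs are stationary and the Q-values are the fixed point of the Bellman optimality operator — maximizes the cumulative long-term (discounted) reward.

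The main obstacle I anticipate is justifying the additive decomposition rigorously: one must argue that the continuation value after the composite action is itself the sum of per-arm continuation values, i.e.\ that the coupling introduced by the network does not destroy the separability that makes the Whittle index meaningful. This hinges on (a) the arms transitioning conditionally independently given the action vector, and (b) the fact that with $k=1$ the set of arms receiving the pseudo-action is a deterministic function of the single chosen arm, so there is no combinatorial conflict (no arm is simultaneously forced into two different pseudo-states). I would therefore state explicitly as a lemma that under $k=1$ the per-arm value functions are well-defined and additive, prove it by unrolling the Bellman recursion arm-by-arm, and then the theorem follows by combining this lemma with the convergence of the Q-learning updates and the standard greedy-is-optimal step. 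I expect the $k>1$ case to fail this cleanly — hence its separate treatment in Theorem~2 — precisely because overlapping groups can force conflicting pseudo-actions, breaking (b).
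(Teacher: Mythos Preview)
Your proposal is correct and rests on the same underlying idea as the paper's proof---the additive decomposition of the joint Q-value across arms---but you package it differently, and in one respect more cleanly. The paper proceeds by taking an arbitrary pair $i,j$, expanding $\lambda_i\ge\lambda_j$, and then adding carefully chosen passive-action terms to both sides so that each side collapses to the joint quantity $Q(\mathbf{s},\mathbf{a}=\mathbb{I}_i)$ (resp.\ $\mathbb{I}_j$); this forces a three-way case split on whether $\phi_i$ and $\phi_j$ overlap and whether $i,j$ are themselves connected. Your route sidesteps that case analysis entirely: by writing the value of ``select $i$'' as the all-passive value plus $\lambda_i$, you get $Q(\mathbf{s},\mathbf{a}=\mathbb{I}_i)=C+\lambda_i$ with $C=\sum_n Q(s_n,0)$ independent of $i$, so the argmax equivalence is immediate without ever comparing two arms' groups. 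That is a genuine simplification.

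Two remarks. First, the Whittle-indexability and Q-learning-convergence machinery you invoke in the first paragraph is not needed for this theorem as stated: the paper's claim is purely the algebraic equivalence between ranking by $\lambda$ and ranking by the joint Q-value, and it takes ``joint Q-value $=$ long-term reward'' as definitional. You can drop that scaffolding. Second, the obstacle you flag---that the continuation value must itself decompose additively for the per-arm $Q$'s to sum to the true joint $Q$---is real, and the paper does not address it either; it silently identifies $\sum_n Q_n(s_n,a_n)$ with $Q(\mathbf{s},\mathbf{a})$. Your proposed lemma (conditional independence of transitions given the action vector, plus $k=1$ making the action vector a deterministic function of the chosen arm) is exactly the right way to close that gap, and would make your argument strictly more complete than the paper's.
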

\begin{proof}
According to the approach, we select the arm according to the  $\lambda$ value. Assume arm $i$ has the highest $\lambda$ value, then for any arm $j$ where $j \neq i$, we have
\begin{equation}
    \lambda_i \geq \lambda_j \\
\label{eq:lambda_1}
\end{equation}
According to the definition of $\lambda$ in Equation~\ref{eq:new_lambda}, we move the negative part to the other side, and the left side becomes:
\begin{equation*}
\begin{aligned}
Q(s_i, a_i=1) + \sum_{i\in\phi_i^{-}}(Q(s_i, a_i=1)) + Q(s_j, a_j=0)+ \sum_{j\in\phi_j^{-}}(Q(s_j, a_j=0))
\end{aligned}
\end{equation*} 
and the right side is similar.
There are three cases:

\begin{itemize}[leftmargin=*]
    \item {arm $i$ and arm $j$ are not connected, and group $\phi_i$ and $\phi_j$ has no overlap, i.e., $\phi_i \cap \phi_j = \emptyset $}. We add $\underset{z\notin \phi_i \wedge z\notin \phi_j}{\sum} Q(s_z, a_z=0)$ on both sides. This denotes the addition of $Q(s_z,a_z=0)$ for all arm $z$ that are not included in the set of $\phi_i$ or $\phi_j$. We have the left side:
\begin{equation}
\resizebox{\linewidth}{!}{$
    \begin{aligned}
    &Q(s_i, a_i=1) + \sum_{i\in\phi_i^{-}}(Q(s_i, a_i=1))+Q(s_j, a_j=0) + \sum_{j\in\phi_j^{-}}(Q(s_j, a_j=0)) + \underset{z\notin \phi_i \wedge z\notin \phi_j}{\sum} Q(s_z, a_z=0) \\
    =&Q(s_i, a_i=1) + \sum_{i\in\phi_i^{-}}(Q(s_i, a_i=1))+\sum_{j\notin\phi_i}(Q(s_j, a_j=0))\\
    =& Q(\textbf{s},\textbf{a}=\mathbb{I}_{i})
    \end{aligned}
$}
\end{equation}

Similarly, we do the same for the right side and thus, the equation~\ref{eq:lambda_1} becomes
\begin{equation*}
    Q(\textbf{s},\textbf{a}=\mathbb{I}_{i}) \geq Q(\textbf{s},\textbf{a}=\mathbb{I}_{j})
\end{equation*}

\item {arm $i$ and arm $j$ are not connected, but group $\phi_i$ and $\phi_j$ has overlap, i.e., $\phi_i \cap \phi_j \neq \emptyset $}. In this case, we add $\underset{z\notin \phi_i \wedge z\notin \phi_j}{\sum} Q(s_z, a_z=0) - \underset{z\in \phi_i \cap \phi_j}{\sum} Q(s_z, a_z=0)$  on both sides.

\item {arm $i$ and arm $j$ are connected, and group $\phi_i$ and $\phi_j$ has overlap, i.e., $\phi_i \cap \phi_j \neq \emptyset $, and $\{i,j \}\subset \phi_i \cap \phi_j $}. This case is similar to the previous one, we add $\underset{z\notin \phi_i \wedge z\notin \phi_j}{\sum} Q(s_z, a_z=0) - \underset{z\in \phi_i \cap \phi_j}{\sum} Q(s_z, a_z=0)$  on both sides. 
\end{itemize}
The detailed proof is provided in Appendix~\ref{app:sec:proof_thm}.
\end{proof}

Thus when $k=1$, selecting the top arm according to the $\lambda$ value is equivalent to maximizing the cumulative long-term reward, and is guaranteed to be optimal.

\begin{theorem}\label{thm:complexity}
    When $k>1$, selecting the $k$ arms is a NP-hard problem.
    The non-asymptotic tight upper bound and non-asymptotic tight lower bound for getting the optimal solution are $o(C(n, k))$ and $\omega(N)$, respectively.
\end{theorem}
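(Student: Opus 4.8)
Write $C(n,k)=\binom{n}{k}$. The plan is to establish the three assertions separately: NP-hardness of the $k$-arm selection problem, an $o(C(n,k))$ upper bound for an exact algorithm, and an $\omega(N)$ lower bound.

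\emph{NP-hardness.} Given the learned Q-values, choosing the best $k$ arms amounts to maximizing, over size-$k$ subsets $S$, the aggregate marginal value $\sum_{i\in S}\big(Q(s_i,2)-Q(s_i,0)\big)+\sum_{j\in(\bigcup_{i\in S}\phi_i^{-})\setminus S}\big(Q(s_j,1)-Q(s_j,0)\big)$, i.e.\ the reward of the arms in $S$ plus that of the semi-activated arms, each counted once. I would reduce from (weighted) Maximum Coverage: given a universe $U$, sets $T_1,\dots,T_M\subseteq U$ and budget $k$, create one arm per set and one arm per element, place element $u$ and every set containing $u$ in a common group, and choose the Q-gaps so that (semi-)activating an element arm contributes $1$ while set-arms contribute (almost) nothing. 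Then selecting $k$ set-arms touches exactly the elements of the union of the chosen sets, and — since picking a set-arm dominates picking an element-arm, which touches only itself among the valuable arms — an optimal $k$-arm choice is an optimal $k$-set cover. To phrase this at the level of EdNetRMABs rather than the index objective, instantiate a single decision step with $P^{1}_{0,1},P^{2}_{0,1}\approx 1$ for element arms and $\approx 0$ for set-arms, $P^{0}_{0,1}\approx 0$ throughout, and all $P^{\cdot}_{1,1}\approx 1$; these satisfy constraints (i)--(ii) of Section~\ref{sec:EdNetRMABs}, and the expected one-step reward is an increasing affine function of the coverage size. Hence the problem is NP-hard.

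\emph{Upper bound $o(C(n,k))$.} Enumerating all $C(n,k)$ size-$k$ subsets and scoring each in $\mathrm{poly}(N)$ time solves the problem, so $O(C(n,k))$ candidates suffice. For the strict improvement, note that arms with identical group membership are interchangeable in the objective, and any two subsets inducing the same activation/semi-activation pattern on the $N$ arms have equal value; since the instance is genuinely networked (some group has size $\ge 2$, so $L<N$) at least one such collision exists, so the number of subsets one actually needs to evaluate is strictly below $C(n,k)$. Concretely, bucket the arms by membership signature and enumerate per-bucket multiplicities rather than labelled subsets; this gives an exact algorithm running in $o(C(n,k))$. Tightness is witnessed by an instance family in which all groups but one are singletons, where the count of inequivalent subsets is $C(n,k)/\mathrm{poly}(N)$, so no exact method can be polynomially bounded.

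\emph{Lower bound $\omega(N)$, and the main obstacle.} Reading the $N$ arms and their memberships is already $\Omega(N)$; to push to $\omega(N)$ I would use an adversary argument in the query model — an algorithm inspecting only $cN$ arm/group incidences for a fixed constant $c$ can be forced to miss a planted group of size $k$ that changes the optimum — so strictly more than linearly many probes are necessary; alternatively, the NP-hardness above already precludes any polynomial (hence linear) time algorithm unless $\mathrm{P}=\mathrm{NP}$. Together these place the exact complexity strictly between $\omega(N)$ and $o(C(n,k))$, as claimed. I expect the reduction to be the delicate step: one must pin down the precise model in which ``selecting $k$ arms'' is posed (index objective versus full EdNetRMABs, one step versus discounted horizon), make the transition matrices simultaneously respect (i)--(ii) and force the optimal policy to degenerate to ``pick $k$ selector-arms once,'' and fix formal meanings for ``non-asymptotic tight'' so that the $o(\cdot)$ and $\omega(\cdot)$ claims say more than ``beats brute force'' and ``must read a super-linear amount,'' respectively.
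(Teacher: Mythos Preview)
Your approach diverges substantially from the paper's. For NP-hardness, the paper argues by analogy with 0/1 knapsack: if one \emph{disregards} the shared-neighbour effect, selecting $k$ arms is ``simplified to the traditional 0/1 knapsack problem,'' and hence the full problem is ``at least as challenging.'' Your reduction from Maximum Coverage is a genuinely different (and more natural) route: the group structure maps directly to a set system, and the no-double-counting of semi-activated neighbours is exactly the coverage union. This buys you an explicit polynomial-time reduction inside the EdNetRMAB model, whereas the paper's knapsack analogy is informal and, taken literally, problematic --- with only a cardinality constraint and no weights, the ``independent arms'' special case is solvable by sorting, not by knapsack.

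For the bounds, the paper reads $\omega(N)$ and $o(C(n,k))$ as best-case and worst-case instance complexities: when the top-$k$ arms by $\lambda$ have disjoint neighbourhoods one simply scans all $N$ arms and picks, giving the lower figure; otherwise one enumerates all $C(n,k)$ subsets, giving the upper figure. Your interpretation --- strict asymptotic bounds on any exact algorithm via an adversary argument and a deduplication trick --- is more ambitious but not what the paper proves.

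One concrete gap in your upper-bound argument: you claim that whenever some group has size at least two, two arms must share the same membership signature, forcing a collision among the $C(n,k)$ subsets. That inference fails. Take arms $1,2,3$ with memberships $\{A,B\},\{A,C\},\{B,C\}$: every group has size two, yet all signatures are distinct, so bucketing by signature yields no savings and your $o(C(n,k))$ claim does not go through on such instances. If you want to keep your reading of the bound, you will need a different mechanism to beat brute force strictly; the paper, for its part, does not attempt to justify the strict $o(\cdot)$ beyond stating that enumeration has $C(n,k)$ cases.
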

\begin{proof_sketch}
    This problem can be considered as a variant of the knapsack problem. If we disregard the influence of the shared neighbor nodes for two selected arms, then selecting arm $i$ will not influence the future selection of arm $j$. In such instances, the problem of selecting the $k$ arms is simplified to the traditional 0/1 knapsack problem, a classic NP-hard problem. Therefore, when considering the effect of shared neighbor nodes for two selected arms, this problem is at least as challenging as the 0/1 knapsack problem.
\end{proof_sketch}

When $k>1$, it is difficult to compute the optimal solution, we provide a heuristic greedy algorithm with the complexity of $O(\frac{(2N-k)*k}{2})$ in Section~\ref{app:subsec:greedy_alg} in the appendix.

\section{Experiment}
In this section, we demonstrate the effectiveness of EduQate against benchmark algorithms on synthetic students and students derived from a real-world dataset, the Junyi Dataset and the OLI Statics dataset. All experiments are run on CPU only.
\begin{figure}[t]
    \includegraphics[width=1.0\linewidth]{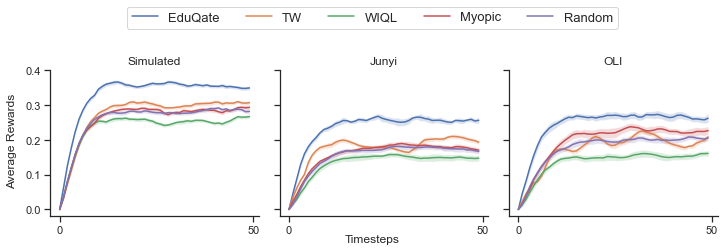}
    \caption{Average rewards for the respective algorithms on 3 datasets, averaged across 30 runs. Shaded regions represent standard error.}
    \label{fig:results_curve}
\end{figure}
In our experiments, we compare EduQate with the following policies:

\begin{itemize}[leftmargin=*]
    \item \textbf{Threshold Whittle (TW)}: This algorithm, proposed by \cite{mate2020collapsing}, utilizes an efficient closed-form approach to compute the Whittle index, considering only the pull action as active. It operates under the assumption that transition probabilities are known and stands as the state-of-the-art in RMABs.
    \item \textbf{WIQL}: This algorithm employs a Q-learning-based Whittle Index approach \citep{biswas2021learn}. It learns Q-values using the pull action as the only active strategy and calculates the Whittle Index based on the acquired Q-values.
    \item \textbf{Myopic}: This strategy disregards the impact of the current action on future rewards, concentrating solely on predicted immediate rewards. It selects the arm that maximizes the expected reward at the immediate time step.
    \item \textbf{Random}: This strategy randomly selects arms with uniform probability, irrespective of the underlying state.
\end{itemize}

Inspired by work in healthcare settings \citep{herlihy2022networked, li2023avoiding}, we compare the policies by the \textit{Intervention Benefit} (\textit{IB}), as shown in the following equation: 
\begin{equation}
    IB_{Random,EQ}(\pi) = \frac{\mathbb{E}_\pi(R(.)) - \mathbb{E}_{Random}(R(.))}{\mathbb{E}_{EQ}(R(.)) - \mathbb{E}_{Random}(R(.))}
\end{equation}
where \textit{EQ} represents EduQate, and \textit{Random} represents a policy where the arms are selected at random. Prior work in educational settings has demonstrated that random policies can yield robust learning outcomes  through spaced repetition \citep{doroudi2019s,ebbinghaus1885gedachtnis}. Therefore, to establish efficacy, successful algorithms must demonstrate superiority over random policies. Our chosen metric, \textit{IB}, effectively compares the extent to which a challenger algorithm $\pi$ outperforms a random policy in comparison to our algorithm.

\subsection{Experiment setup}
In all experiments, we commence by initializing all arms in state 0 and permit the teacher algorithms to engage with the student for a total of 50 actions, pulling exactly 1 arm (i.e. $k=1$) at each time step. Following the completion of these actions, the episode concludes, and the student state is reset. This process is iterated across 800 episodes, for a total of 30 seeds. The datasets used in our experiment are described below:

\textbf{Synthetic dataset.}
Given the domain-motivated constraints on the transition functions highlighted in Section \ref{sec:EdNetRMABs}, we create a simulator based on $N = 50$, $S \in \{0,1\}$, $N_\text{topics} = 20$. We randomly assign arms to topic groups, and allow arms to be assigned to be more than one topic. Under this method, number of arms under each group may not be equal. For each trial, a new transition matrix is generated to simulate distinct student scenarios.

\textbf{Junyi dataset.}
The Junyi dataset \citep{chang2015modeling} is an extensive dataset collected from the Junyi Academy \footnote{http://www.Junyiacademy.org/}, an eLearning platform established in 2012 on the basis of the open-source code released by Khan Academy. In this dataset, there are nearly 26 million student-exercise interactions across 250 000 students in its mathematics curriculum. For this experiment, we selected the top 100 exercises with the most student interactions to create our student models. Using our method to generate groups, the resultant EdNetRMAB has $N = 100$ and $N_{topics} = 21$. 

\textbf{OLI Statics dataset.}
The OLI Statics dataset \citep{bier_2011} comprises student interactions with an online Engineering Statics course\footnote{https://oli.cmu.edu/courses/engineering-statics-open-free/}. In this dataset, each item is assigned one or more Knowledge Components (KCs) based on the related topics. After filtering for the top 100 items with the most student interactions, the resultant EdNetRMAB includes $N = 100$ items and $N_{topics} = 76$ distinct topics.

\subsection{Creating student models}
In this section, we outline the procedure for generating student models aimed at simulating the learning process. To clarify, a student model in this context is defined as a set of transition matrices for all items. These matrices are employed with EdNetRMABs to simulate the learning dynamics. 

We employ various strategies to model transitions within the RMAB framework. Active transitions are determined by assessing the average success rate on a question before and after a learning intervention. Passive transitions are influenced by difficulty ratings, with more challenging questions more prone to rapid forgetting. Semi-active transitions, on the other hand, are computed as proportion of active transition, guided by similarity scores. Here, we provide an outline and the full details can be found in Appendix \ref{app:student_models}.

\textbf{Active Transitions.}
We use data on students' correct response rate after interacting with an item to create the transition matrix for action 2, based on the change in correctness rates before and after a learning intervention.

\textbf{Passive Transitions.}
To construct passive transitions for items, we use relative difficulty scores to determine transitions based on difficulty levels. We assume that higher difficulty correlates with a greater likelihood of forgetting, resulting in higher failure rates. Specifically, higher difficulty values correspond to higher $P_{1,0}^0$ values, indicating a greater likelihood of forgetting. The transition matrix for the passive action $a=0$ is then randomly generated, with values influenced by difficulty levels.

\textbf{Semi-active Transitions.}
To derive semi-active transitions, we use similarity scores between exercises from the Junyi dataset. We first normalize these scores to the range $[0,1]$. Then, for any chosen arm, we compute its transition matrix under the semi-active action $a=1$ as a proportion of its active action transitions, $P^1_{0,1} = \sigma (P^2_{0,1})$, where $\sigma$ signifies the similarity proportion.

The arm's transition matrix for the semi-active action varies due to different similarity scores between pairs in the same group. To address this, we use the average similarity score to determine the proportion. Since the OLI dataset does not contain similarity ratings, we assume a constant similarity rating of $\sigma = 0.8$ for all pairs.

\section{Results}

\begin{table}[t]
\centering
\caption{Comparison of policies on synthetic, Junyi, and OLI datasets. $\mathbb{E}[R]$ represents the average reward obtained in the final episode of training. Statistic after $\pm$ represents standard error across 30 trials.}
\begin{adjustbox}{width=\textwidth}
\begin{tabular}{lcc|cc|cc}
\hline
\multirow{2}{*}{Policy} & \multicolumn{2}{c|}{Synthetic} & \multicolumn{2}{c|}{Junyi} & \multicolumn{2}{c}{OLI} \\
\cline{2-7}
 & $\mathbb{E}[IB](\%) \pm$ & $\mathbb{E}[R] \pm$ & $\mathbb{E}[IB](\%) \pm$ & $\mathbb{E}[R] \pm$ & $\mathbb{E}[IB](\%) \pm$ & $\mathbb{E}[R] \pm$ \\
\hline
Random & - & $26.84 \pm 0.46$ & - & $15.82 \pm 0.34$ & - & $18.46 \pm 0.35$ \\
WIQL & $-49.03 \pm 15.07$ & $24.60 \pm 0.43$ & $-26.77 \pm 7.39$ & $14.01 \pm 0.97$ & $-60.20 \pm 19.38$ & $14.33 \pm 0.42$ \\
Myopic & $-3.44 \pm 5.81$ & $27.07 \pm 0.52$ & $10.74 \pm 3.13$ & $16.86 \pm 0.356$ & $39.92 \pm 12.00$ & $20.51 \pm 0.48$ \\
TW & $37.21 \pm 17.02$ & $28.50 \pm 0.47$ & $31.284 \pm 2.65$ & $15.819 \pm 0.34$ & $0.20 \pm 9.27$ & $18.07 \pm 0.21$ \\
\textbf{EduQate} & $\bf{100.0} $ & $\bf{34.33 \pm 0.49}$ & $\bf{100.0} $ & $\bf{24.53 \pm 0.31}$ & $\bf{100.0} $ & $\bf{25.47 \pm 0.47}$ \\
\hline
\end{tabular}
\end{adjustbox}
\label{tab:results}
\end{table}
The experimental results for the synthetic, Junyi, and OLI datasets are shown in Table \ref{tab:results}. We report the average intervention benefit $IB$ and final episode rewards from thirty independent runs for five algorithms: EduQate, TW, WIQL, Myopic, and Random. EduQate consistently outperforms the other policies across all datasets, demonstrating higher intervention benefits and average rewards.

In terms of $IB$, we note that all challenger policies do not exceed 50\%, indicating two key points. First, as noted in prior works \citep{doroudi2019s}, our results confirm that random policies in educational settings are robust and difficult to surpass, even when algorithms are equipped with knowledge of the learning dynamics. Second, our interdependency-aware EduQate performs well over random policies and other algorithms, highlighting the importance of considering network effects and interdependencies in EdNetRMABs.

Notably, WIQL, which relies solely on Q-learning for active and passive actions, performs worse than a random policy, likely due to misattributing positive network effects to passive actions. Despite having access to the transition matrix, TW does not perform as well as the interdependency-aware EduQate. While it has demonstrated effectiveness in traditional RMABs, TW weaknesses become evident in the current setting, where pulling an arm has wider implications to other arms. Overall, EduQate has demonstrated robust and effective performance in maximizing rewards across different datasets. Figure \ref{fig:results_curve} shows the average rewards obtained in the final episode for each algorithm.

Figure \ref{fig:overall_network} provides visualizations of the networks generated from synthetic students and mined from real-world datasets. The synthetic dataset produces networks with distinct isolated groups, contrasting with the more intricate and interconnected networks from the Junyi and OLI datasets, reflecting real-world complexities. Despite these differing topologies and levels of interdependency, EduQate performs well under all network setups.
\begin{figure*}[t]
    \centering
    \begin{minipage}[b]{0.30\linewidth}
        \centering
        \includegraphics[width=\linewidth]
        {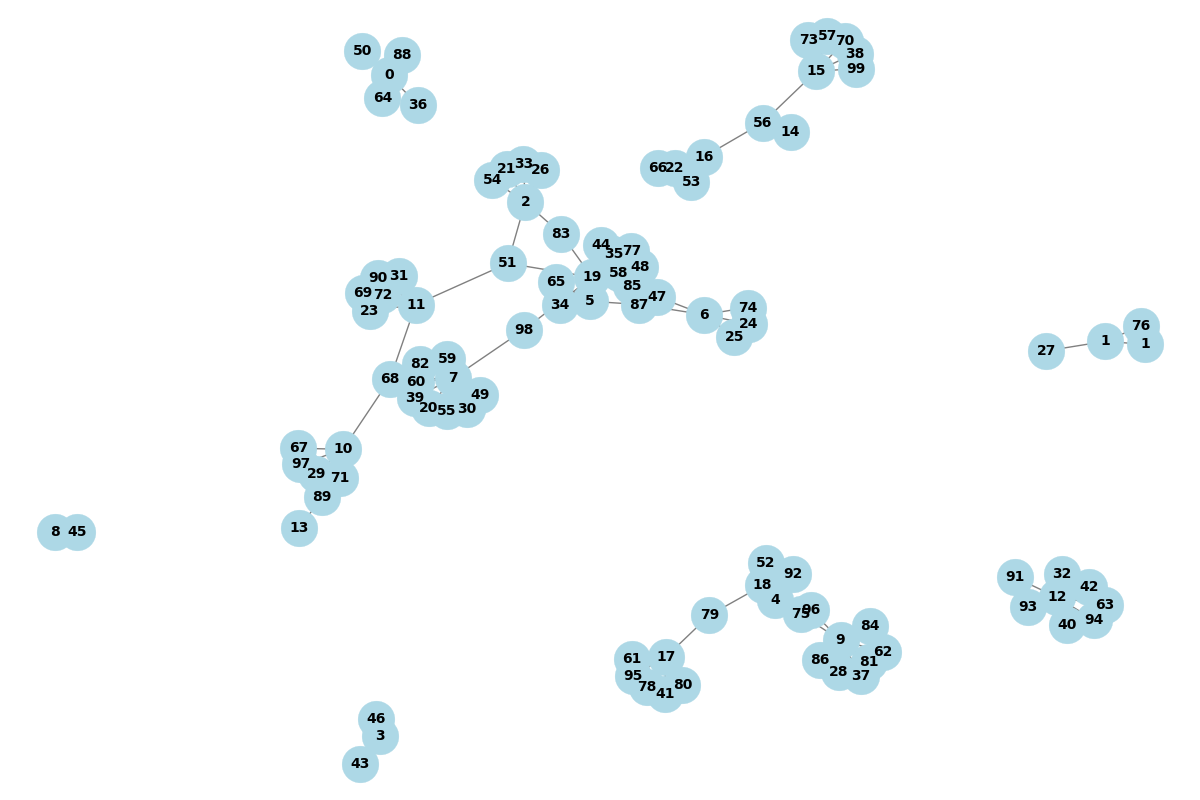}
        \scriptsize {Synthetic Network $N=100$, $N_{topics} = 20$}
    \end{minipage}
    \hfill
    \begin{minipage}[b]{0.30\linewidth}
    \centering
        \includegraphics[width=\linewidth]{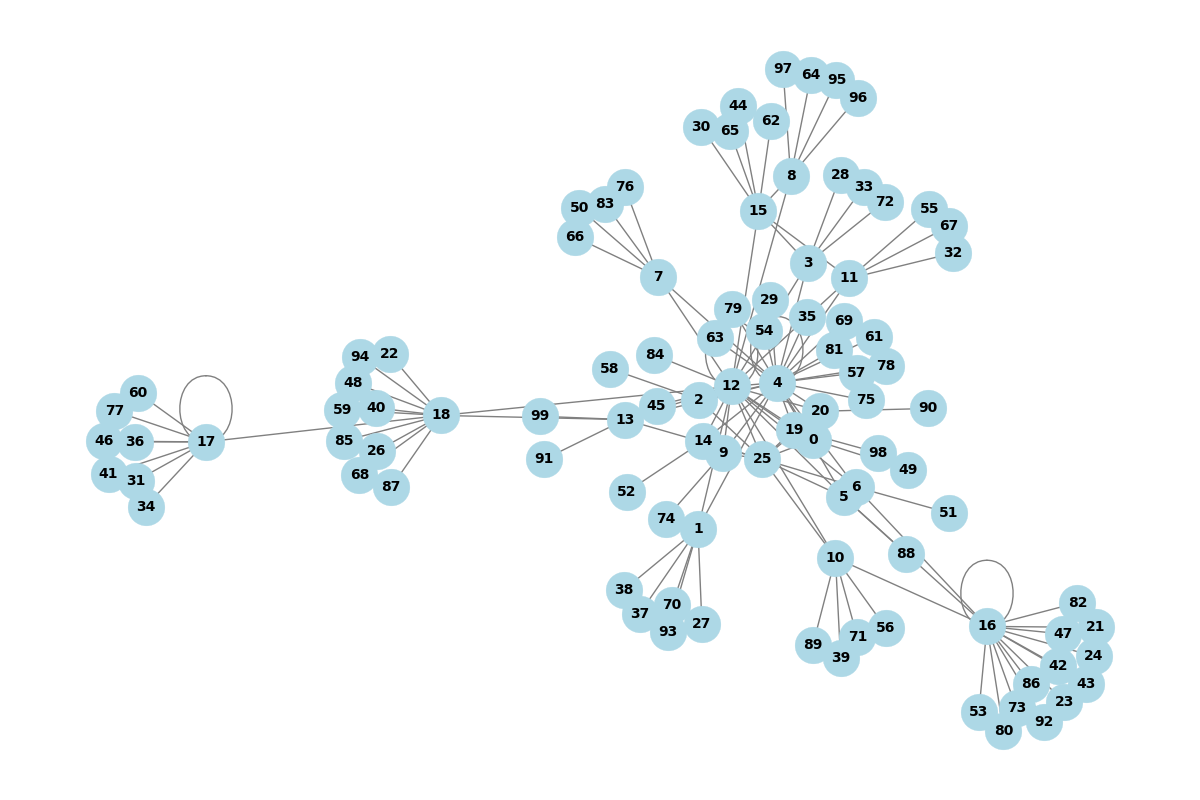}            
        \scriptsize {Junyi network, abridged to $N_{topics}=7$ for brevity.}
    \end{minipage}
    \hfill
    \begin{minipage}[b]{0.30\linewidth}
    \centering
        \includegraphics[width=\linewidth]{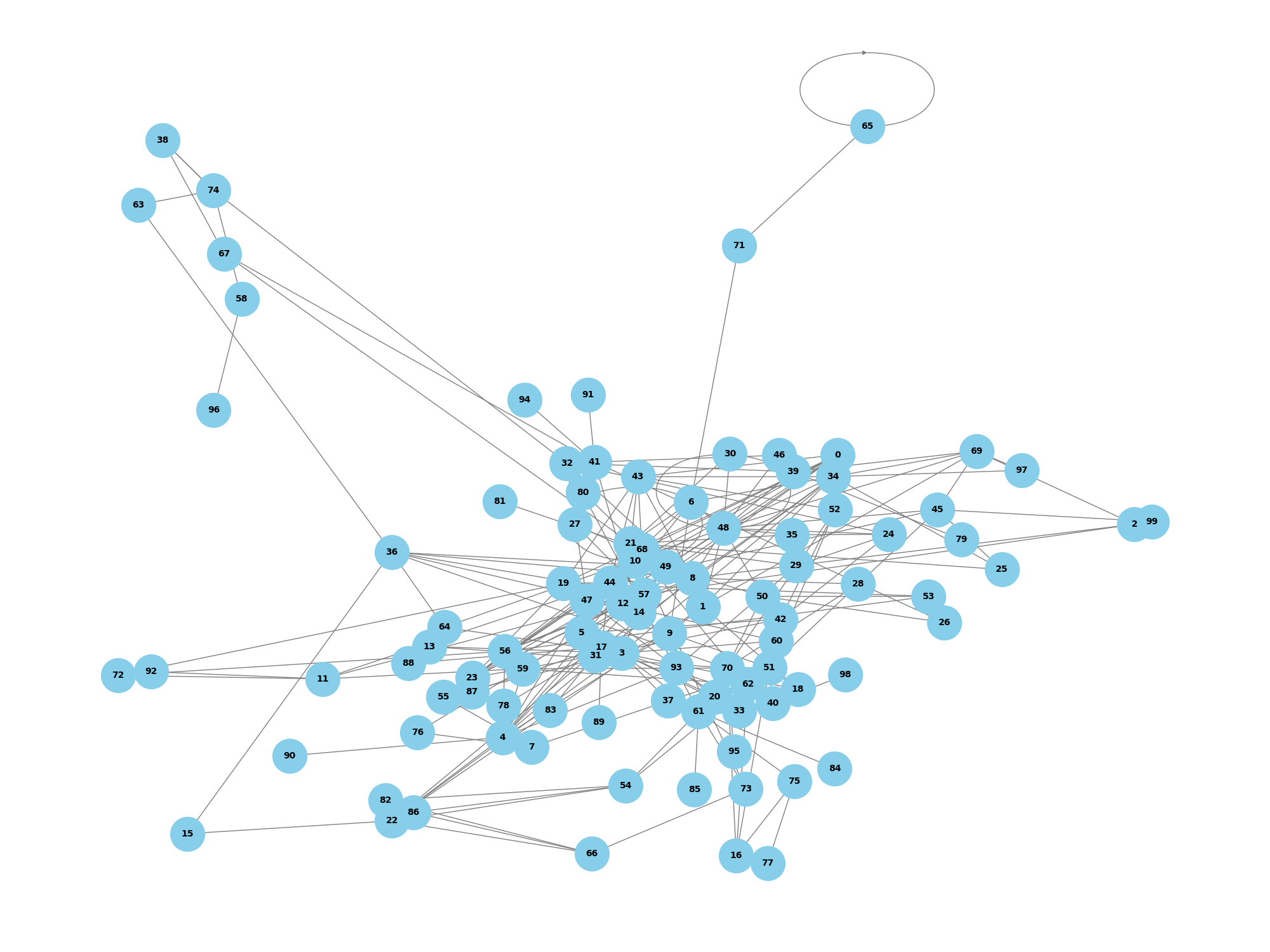}
        \scriptsize {OLI network, $N=100$, $N_{topics} = 76$.}
    \end{minipage}
  \caption{This visualization compares network complexities from our experiments. The synthetic dataset (left) shows simpler, isolated groups, while the real-world datasets (Junyi, middle; OLI,right) displays more intricate and interconnected relationships amongst items.}
  \label{fig:overall_network}
\end{figure*}
In Appendix \ref{app:network_variations}, we explore the effects of different network topologies by varying the number of topics while limiting the membership of each item. We find that as network interdependencies are reduced, the network effects diminish, and such EdNetRMABs can be approximated to traditional RMABs with independent arms. Under these conditions, our algorithm does not perform as well as other baseline policies.

Finally, an ablation study detailed in Appendix \ref{sec:ablation} examines the effectiveness of the replay buffer in EduQate. The study shows that the replay buffer helps overcome the cold-start problem, where initial learning episodes provide sub-optimal experiences for students \citep{bassen2020reinforcement}.

\section{Conclusion and Limitations}
In this paper, we introduced EdNetRMABs to the education setting, a variant of MAB designed to model interdependencies in educational content. We also proposed EduQate, a novel Whittle-based learning algorithm tailored for EdNetRMABs. Unlike other Whittle-based algorithms, EduQate computes an optimal policy without requiring knowledge of the transition matrix, while still accounting for the network effects of pulling an arm. We demonstrated the guaranteed optimality of a policy trained under EduQate and showcased its effectiveness on synthetic and real-world datasets, each with its own characteristic.

Our work assumes that student knowledge states are fully observable and available at all times, which is a limitation. Despite this, we believe our work is significant and can inspire further research to improve efficiencies in education. For future work, we aim to extend EduQate to handle partially observable states and address the cold-start problem in education systems by minimizing the initial exploratory phase.

\bibliography{neurips_2024}
\bibliographystyle{plainnat} 
\newpage
\appendix
\section*{Appendix/Supplementary Materials}
\section{Table of Notations}
\begin{table}[h]\caption{Notations}
\centering
\begin{adjustbox}{max width=\textwidth}
\begin{tabular}{|l|l|}
\toprule
\multicolumn{1}{|c|}{\textbf{Notation}} & \multicolumn{1}{c|}{\textbf{Description}}                                                                      \\ \midrule
$N,N_{topics}$                                   & $N$: number of arms in EdNetRMABs; $N_{topics}$: number of topic groups                              \\ \midrule
$s_i^t$                           & $s_i^t$: state of arm $i$ at time step $t$. $1$: learned, $0$: unlearned.  \\ \midrule
$a_i^t$                           & $a_i^t$: action of arm $i$ at time step $t$. $0$: passive action, $1$: semi-active action, $2$: active action.               \\ \midrule
$\mathbf{s},\mathbf{a}$                                   & $\mathbf{s},\mathbf{a}$: joint state vector and joint action vector of EdNetRMABs.     \\ \midrule
$\phi_i,\phi_i^-$                           & $\phi_i$: the set of arms that includes the arm $i$ and its connected neighbors,  $\phi_i^-$: $\phi_i$ that exclude arm $i$.           \\ \midrule
$P_{s,s^\prime}^{i,a}$                               & $P_{s,s^\prime}^{i,a}$ is the probability of transition from state $s$ to $s^\prime$ when arm $i$ is taking action $a$.          \\ \midrule
$Q_{i}(s_i,a_i)$                               & $Q_{i}(s_i,a_i)$ is the state-action value function for the arm $i$ when taking action $a_i$ with state $s_i$. \\ \midrule
$V_{i}(s_i)$                               & The value function for arm $i$ at the state $s_i$.
\\ \bottomrule
\end{tabular}
\end{adjustbox}
\label{tab:notation}
\end{table}
\section{Proof for the theorem}\label{app:sec:proof_thm}
We rewrite the theorem here for ease of explanation.
\begin{theorem}
Choose top arms according to the $\lambda$ value in Equation~\ref{eq:new_lambda} is equivalent to maximize the cumulative long-term reward.
\end{theorem}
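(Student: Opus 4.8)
The plan is to show that the per-arm index $\lambda_i$ defined in Equation~\ref{eq:new_lambda} can be rewritten, after adding a suitable arm-independent constant to each side of an inequality $\lambda_i \ge \lambda_j$, as a comparison between two joint $Q$-values $Q(\mathbf{s}, \mathbf{a} = \mathbb{I}_i)$ and $Q(\mathbf{s}, \mathbf{a} = \mathbb{I}_j)$, where $\mathbb{I}_i$ denotes the joint action in which arm $i$ is active (action $2$), every arm in $\phi_i^-$ is semi-active (action $1$), and all remaining arms are passive (action $0$). The key structural fact I would invoke is that, under the budget constraint $k=1$ and the decentralized $Q$-decomposition used by EduQate, the joint $Q$-value for a feasible joint action factorizes additively over arms: $Q(\mathbf{s},\mathbf{a}) = \sum_{n} Q(s_n, a_n)$. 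Granting this, maximizing the long-term reward over feasible joint actions is exactly $\arg\max_i Q(\mathbf{s},\mathbb{I}_i)$, so it suffices to prove $\lambda_i \ge \lambda_j \iff Q(\mathbf{s},\mathbb{I}_i) \ge Q(\mathbf{s},\mathbb{I}_j)$.

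First I would fix arms $i \ne j$ and start from $\lambda_i \ge \lambda_j$. Expanding both sides via Equation~\ref{eq:new_lambda} and moving all $Q(\cdot, a=0)$ terms appropriately, the inequality becomes
\[
Q(s_i, 2) + \sum_{n\in\phi_i^-} Q(s_n,1) - \!\!\sum_{n\in\phi_i}\! Q(s_n,0) \;\ge\; Q(s_j, 2) + \sum_{n\in\phi_j^-} Q(s_n,1) - \!\!\sum_{n\in\phi_j}\! Q(s_n,0).
\]
Now I add the same constant $\sum_{n} Q(s_n,0)$ to both sides. On the left, the negative block $-\sum_{n\in\phi_i}Q(s_n,0)$ cancels the corresponding terms, leaving $\sum_{n\notin\phi_i}Q(s_n,0)$, and combining with $Q(s_i,2)+\sum_{n\in\phi_i^-}Q(s_n,1)$ yields exactly $\sum_n Q(s_n, (\mathbb{I}_i)_n) = Q(\mathbf{s},\mathbb{I}_i)$; symmetrically for the right side. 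Hence $\lambda_i \ge \lambda_j \iff Q(\mathbf{s},\mathbb{I}_i) \ge Q(\mathbf{s},\mathbb{I}_j)$, and since this holds for every $j$, the $\lambda$-maximizing arm is the long-term-reward-maximizing action. The three bulleted cases in the statement (disjoint groups, overlapping groups with $i,j$ unconnected, overlapping groups with $i,j$ connected) are just bookkeeping for which shared terms appear on both sides; in each case the constant added is $\sum_n Q(s_n,0)$ minus a correction $\sum_{z\in\phi_i\cap\phi_j}Q(s_z,0)$ that is itself arm-independent, so the equivalence is preserved.

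The main obstacle I anticipate is justifying the additive factorization $Q(\mathbf{s},\mathbf{a}) = \sum_n Q(s_n,a_n)$ rather than the algebraic rearrangement, which is routine. This requires arguing that because exactly one arm is pulled, the reward $r^t = \sum_i s^t(i)$ decomposes per arm, the transitions are conditionally independent across arms given the joint action, and the semi-active action on $\phi_i^-$ is deterministically induced by the choice of $i$ — so the joint MDP value function inherits the additive structure of the per-arm $Q$-functions that EduQate actually learns. I would make this precise by writing the Bellman equation for the joint process, substituting the additive ansatz, and checking it is a fixed point; the $k=1$ constraint is what prevents action interference between arms and keeps the induced semi-active pattern well-defined. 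A secondary subtlety is that when groups overlap, an arm $z\in\phi_i\cap\phi_j$ would nominally receive action $1$ under both $\mathbb{I}_i$ and $\mathbb{I}_j$; I should confirm the index definition and the feasible joint action $\mathbb{I}_i$ treat such $z$ consistently (semi-active once, not twice), which is exactly why the correction term $\sum_{z\in\phi_i\cap\phi_j}Q(s_z,0)$ is subtracted in cases two and three.
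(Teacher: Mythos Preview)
Your proposal is correct and follows essentially the same route as the paper: show that $\lambda_i \ge \lambda_j$ is equivalent to $Q(\mathbf{s},\mathbb{I}_i) \ge Q(\mathbf{s},\mathbb{I}_j)$ by adding an arm-independent constant to both sides, then invoke the additive decomposition $Q(\mathbf{s},\mathbf{a}) = \sum_n Q(s_n,a_n)$.

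Two small differences are worth remarking on. First, your choice of the single constant $\sum_n Q(s_n,0)$ handles all three overlap cases at once, whereas the paper carries out an explicit case split (disjoint groups, overlapping groups with $i,j$ unconnected, overlapping groups with $i,j$ connected) and adds a case-specific constant $\sum_{z\notin\phi_i\cup\phi_j}Q(s_z,0)$ with an overlap correction; your formulation is cleaner and makes the case analysis visibly redundant. Second, you explicitly isolate the additive factorization of the joint $Q$-function as the substantive step needing justification and sketch how the $k=1$ constraint, per-arm rewards, and conditionally independent transitions would establish it; the paper simply uses this decomposition implicitly when it writes the sum of per-arm $Q$-values as $Q(\mathbf{s},\mathbf{a}=\mathbb{I}_i)$. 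Your attention to this point is an improvement in rigor, not a departure in strategy.
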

\begin{proof}
According to the approach, we select the arm according to the  $\lambda$ value. Assume arm $i$ has the highest $\lambda$ value, then for any arm $j$, where $i\neq j$, we have
\begin{equation}
\resizebox{\linewidth}{!}{$
    \begin{aligned}
    \lambda_i &\geq \lambda_j \\
    Q(s_i, a_i=1) - Q(s_i, a_i=0) + \sum_{i\in\phi_i^{-}}(Q(s_i, a_i=1) - Q(s_i, a_i=0)) &\geq Q(s_j, a_j=1) - Q(s_j, a_j=0) + \sum_{j\in\phi_j^{-}}(Q(s_j, a_j=1) - Q(s_j, a_j=0))\\
    Q(s_i, a_i=1) + \sum_{i\in\phi_i^{-}}(Q(s_i, a_i=1))+Q(s_j, a_j=0)+ \sum_{j\in\phi_j^{-}}(Q(s_j, a_j=0)) &\geq Q(s_j, a_j=1) + \sum_{j\in\phi_j^{-}}(Q(s_j, a_j=1))+ Q(s_i, a_i=0)+ \sum_{i\in\phi_i^{-}}(Q(s_i, a_i=0))
    \end{aligned}
$}\label{app:eq:lambda_1}
\end{equation}
There are two cases:

\begin{itemize}[leftmargin=*]
    \item \textbf{arm $i$ and arm $j$ are not connected, and group $\phi_i$ and $\phi_j$ has no overlap, i.e., $\phi_i \cap \phi_j = \emptyset $}. We add $\underset{z\notin \phi_i \wedge z\notin \phi_j}{\sum} Q(s_z, a_z=0)$  on both sides, we can have the left side:
\begin{equation}
\resizebox{\linewidth}{!}{$
    \begin{aligned}
    &Q(s_i, a_i=1) + \sum_{i\in\phi_i^{-}}(Q(s_i, a_i=1))+Q(s_j, a_j=0)+ \sum_{j\in\phi_j^{-}}(Q(s_j, a_j=0)) + \underset{z\notin \phi_i \wedge z\notin \phi_j}{\sum} Q(s_z, a_z=0) \\
    = &Q(s_i, a_i=1) + \sum_{i\in\phi_i^{-}}(Q(s_i, a_i=1))+\sum_{j\notin\phi_i^{-}}(Q(s_j, a_j=0))\\
    = &Q(\textbf{s},\textbf{a}=\mathbb{I}_{i})
    \end{aligned}
$}
\end{equation}
Similarly, the right side becomes 
\begin{equation}
    Q(s_j, a_j=1) + \sum_{j\in\phi_j^{-}}(Q(s_j, a_j=1))+\sum_{i\notin\phi_j}(Q(s_i, a_i=0))=Q(\textbf{s},\textbf{a}=\mathbb{I}_{j})
\end{equation}
Thus, the equation~\ref{eq:lambda_1} becomes
\begin{equation}
    Q(\textbf{s},\textbf{a}=\mathbb{I}_{i}) \geq Q(\textbf{s},\textbf{a}=\mathbb{I}_{j})
\end{equation}

\item \textbf{arm $i$ and arm $j$ are not connected, but group $\phi_i$ and $\phi_j$ has overlap, i.e., $\phi_i \cap \phi_j \neq \emptyset $}. In this case, we add $\underset{z\notin \phi_i \wedge z\notin \phi_j}{\sum} Q(s_z, a_z=0) - \sum_{z\in \phi_i \cap \phi_j} Q(s_z, a_z=0)$  on both sides, we can have the left side:
\begin{equation}
\resizebox{\linewidth}{!}{$
    \begin{aligned}
    &Q(s_i, a_i=1) + \sum_{i\in\phi_i^{-}}(Q(s_i, a_i=1))+Q(s_j, a_j=0)+ \sum_{j\in\phi_j^{-}}(Q(s_j, a_j=0)) + \underset{z\notin \phi_i \wedge z\notin \phi_j}{\sum} Q(s_z, a_z=0) - \sum_{z\in \phi_i \cap \phi_j} Q(s_z, a_z=0) \\
    = &Q(s_i, a_i=1) + \sum_{i\in\phi_i^{-}}(Q(s_i, a_i=1))+\ \sum_{j\in\phi_j}(Q(s_j, a_j=0)) + \underset{z\notin \phi_i \wedge z\notin \phi_j}{\sum} Q(s_z, a_z=0) - \sum_{z\in \phi_i \cap \phi_j} Q(s_z, a_z=0) \\
    = &Q(s_i, a_i=1) + \sum_{i\in\phi_i^{-}}(Q(s_i, a_i=1))+\sum_{j\notin\phi_i^{-}}(Q(s_j, a_j=0))\\
    = &Q(\textbf{s},\textbf{a}=\mathbb{I}_{i})
    \end{aligned}
$}
\end{equation}
Similarly, the right side becomes 
\begin{equation}
    Q(s_j, a_j=1) + \sum_{j\in\phi_j^{-}}(Q(s_j, a_j=1))+\sum_{i\notin\phi_j}(Q(s_i, a_i=0))=Q(\textbf{s},\textbf{a}=\mathbb{I}_{j})
\end{equation}

\item \textbf{arm $i$ and arm $j$ are connected, and group $\phi_i$ and $\phi_j$ has overlap, i.e., $\phi_i \cap \phi_j \neq \emptyset $, and $\{i,j \}\subset \phi_i \cap \phi_j $}. This case is similar to the previous one, we add $\underset{z\notin \phi_i \wedge z\notin \phi_j}{\sum} Q(s_z, a_z=0) - \sum_{z\in \phi_i \cap \phi_j} Q(s_z, a_z=0)$  on both sides, we can have the left side: $Q(\textbf{s},\textbf{a}=\mathbb{I}_{i})$ and the right side $Q(\textbf{s},\textbf{a}=\mathbb{I}_{j})$.
\end{itemize}

\end{proof}

We show that, using Theorem \ref{thm:lambda}, selecting the top arms according to the $\lambda$ value is guaranteed to maximize the cumulative long-term reward, thus proving it to be optimal.

However when it comes to the case where $k>1$, selecting the top $k$ arms according to the $\lambda$ value is not guaranteed to be optimal. Let the $\Phi$ denote the set of arms that are selected, i.e., $a_i=2$ if $i\in\Phi$. Because once the arm $i$ is added to the selected arm set $\Phi$, the benefit of selecting arm $j$ will also be influenced if the arm $j$ has the shared connected neighbor arms with arm $i$, i.e., $\phi_i \cap \phi_j \neq \emptyset$. To this end, finding the optimal solution is difficult, as we need to list all the possible solution sets. The non-asymptotic tight upper bound and non-asymptotic tight lower bound for getting the optimal solution are $o(C(n, k))$ and $\omega(N)$, respectively.

We provide the proof for Theorem~\ref{thm:complexity}:
\begin{proof}
    When considering the influence of the shared neighbor nodes for two selected arms, then selecting arm $i$ will influence the future benefit of selecting arm $j$ if arm $i$ and arm $j$ have the overlapped neighbor nodes, i.e., $\phi_i \cap \phi_j \neq \emptyset$. This is because the calculation of $\lambda_j$, as some arms $z\in \phi_i \cap \phi_j$ already receive the semi-active action $a=1$ due to the selection of arm $i$, the subsequent selection of arm $j$ would not double introduce the benefit from those arms $z$ who already included in $\phi_i$.
    However, if the top $k$ arms ranked according to their $\lambda$ value do not have any overlaps in their connected neighbor nodes, i.e, $\phi_i \cap \phi_j = \emptyset$ for $\forall i,j$, where arm $i$ and arm $j$ are top $k$ arms according to $\lambda$ value. We can directly add those top $k$ arms to the action set $\Phi$, and the solution is guaranteed to be optimal. Then we have the non-asymptotic tight lower bound for getting the optimal solution which is $\omega(N)$. Otherwise, if the top $k$ arms ranked according to their $\lambda$ value have any overlaps in their connected neighbor nodes, to get the optimal solutions, we need to list all possible combinations of the $k$ arms, which have the $C(n,k)$ cases, and computing the corresponding sum of the $\lambda$ value. In this case, we can derive that the non-asymptotic tight upper bound for getting the optimal solution is $o(C(n, k))$. 
\end{proof}

\section{Greedy algorithm when $k>1$}\label{app:subsec:greedy_alg}
When $k>1$, it is difficult to compute the optimal solution as we might list all possible solutions, and the complexity is $O(C(n,k))$, Thus we provide a heuristic greedy algorithm to find the near-optimal solutions. The process to decide the selected arm set $\Phi$ is as follows:
\begin{enumerate}
    \item We first compute the independent $\lambda$ value for each arm $i$, where $i\in \{ 1,\dots, N \}$, where    $ \lambda_i = Q(s_i, a_i=1) - Q(s_i, a_i=0) + \sum_{j\in\phi_i^{-}}(Q(s_i, a_i=2) - Q(s_i, a_i=0))$;
    \item We add the arm with the top $\lambda$ value to the set $\Phi$;
    \item We recompute the $\lambda$ value for the each arm, note that we will remove $Q(s_j,a_j)$ in the $\lambda$ equation if $j \in\Phi$ or $j\in \phi_j$ for $\forall i \in \Phi$;
    \item we add the arm with the top $\lambda$ value to the set $\Phi$, and repeat the step 3 and 4 until we add $k$ arms to set $\Phi$.    
\end{enumerate}
The intuition of such a heuristic greedy algorithm is to add the arm that maximizes the marginal gain to the action. And the complexity for the greedy algorithm is $O(\frac{(2N-k)*k}{2})$.

\section{Generating Student Models from Junyi and OLI Dataset}
\label{app:student_models}
In this section, we describe the features in Junyi and OLI dataset which we use in developing the transition matrices.

The datasets contain the following features which we use in various aspects to generate the student models and the network:
\begin{itemize}
    \item Topic \& Knowledge Component Classification: Items are classified into topics (Junyi) or KCs (OLI). This classification is employed to group items and establish the initial network.
    \item Similarity: The Junyi dataset offers expert ratings for exercise similarity, enabling a nuanced approach to form richer group memberships. High similarity scores group exercises together, irrespective of topic tags.
    \item Difficulty: The Junyi dataset provides expert ratings to determine the relative difficulty of exercise pairs. In the OLI dataset, we use the overall correct response rate as a measure of difficulty.
    \item Rate of Correctness: By analyzing student-exercise interactions, we calculate the frequency of correct answers for each question, offering insights into the improvement of knowledge over time.
\end{itemize}
\subsection{Active Transitions}
\paragraph{Junyi Dataset} 
The Junyi dataset contains \texttt{earned\_proficiency} feature which indicates if the student has achieved mastery of the topic based on Khan Academy's algorithm\footnote{http://david-hu.com/2011/11/02/how-khan-academy-is-using-machine-learning-to-assess-student-mastery.html}. Thus, we take the number of attempts before \texttt{earned\_proficiency=1} as $P^2_{0,1}$, and the errors made during mastery as $P^2_{1,0}$.

\paragraph{OLI Dataset}
We possess records of students' accuracy on quiz questions after studying specific topics. To derive the transition matrix for the student with the corresponding action 2, we utilize the change in correctness rate before and after a learning intervention. 

Given that proportion of correct attempts at time $t$ as $a^t$, then $a^{t+1} = P^2_{0,1} (1-a^t) + P^2_{1,1} (a^t)$. We use a linear regressor to estimate the respective $P^2$, constraining it to produce positive values and clipping the values to $0.99$ when required. 

\subsection{Passive Transitions}
To construct passive transitions for exercises, we utilize relative difficulty scores to determine transitions based on difficulty levels. We operate under the assumption that the difficulty of an exercise is linked to its likelihood of being forgotten, thereby resulting in a higher failure rate. More precisely, higher difficulty values of an exercise correspond to higher $P_{1,0}^0$ values, indicating a greater likelihood of forgetting. The transition matrix for the passive action $a=0$ is then randomly generated, with the values influenced by the difficulty levels.

\subsection{Semi-active Transitions}
To derive semi-active transitions, the Junyi dataset contains similarity scores between two distinct exercises, quantifying their similarity on a 9-point Likert scale. Once the transition matrices are computed under the active action $a=2$ for all arms, we proceed to calculate the transition matrix for the semi-active action $a=1$. This involves normalizing the similarity scores to the range $[0,1]$, denoted as $\sigma$. For any chosen arm/topic, we can then compute its neighbor's transition matrix under the semi-active action $a=1$ with $P^1_{0,1} = \sigma (P^2_{0,1})$, where $\sigma$ signifies the similarity proportion.
It is worth noting that an arm's transition matrix for the semi-active action varies due to different neighbors being selected — different neighbors correspond to different similarity scores.

To address this, we can store the transition matrix of semi-active actions for different neighbor selection scenarios, preserving the flexibility of our algorithm. In this work, for simplicity, we opt not to distinguish the impact of different neighbors being selected. Instead, we calculate the average similarity for all arms in a group average them, and use the resultant average as $\sigma$. 

For the OLI Statics dataset, we use a constant value of $\sigma=0.8$ since there are no similarity scores available. 

\section{Additional Experiment Results and Discussion}
\subsection{Comparing Different Network Setups}
\label{app:network_variations}
\begin{figure*}[h]
    \centering
    \includegraphics[width=0.8\linewidth]{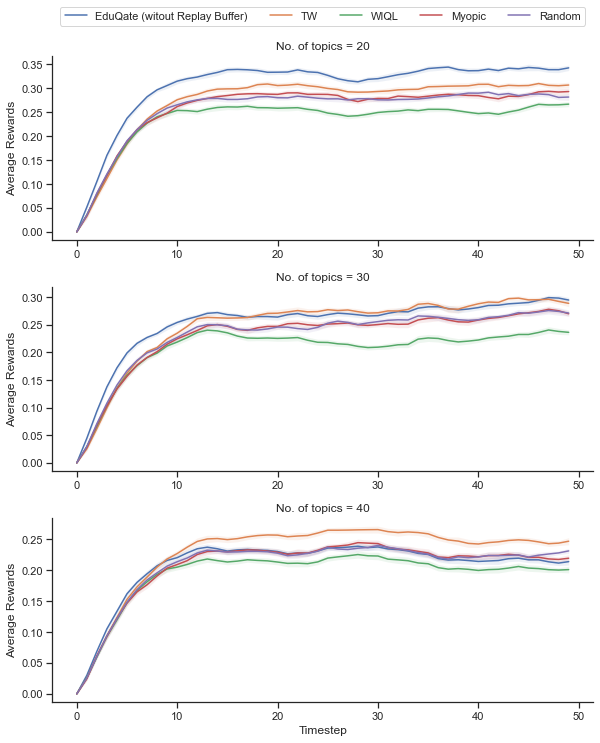}
    \caption{Average rewards for the respective algorithms, on the last episode of training. Note that as $N_{topics}$ increase, the network effects are reduced, and most algorithms are not better than a random policy.}
    \label{fig:synthetic_full_curve}
\end{figure*}
\begin{table}[h]
\caption{Comparison of policies on synthetic dataset, with different network setups. Note that that as $N_{topics}$ increase, the reliability of any algorithms decreases, as seen by the standard deviations of their average $IB$. EduQate- here refers to the EduQate algorithm without replay buffer.}
\label{table:synthetic-results-full}
\vskip 0.15in
\begin{center}
\begin{small}
\begin{sc}
\begin{tabular}{lcccr}
\toprule
$N_{topics}$ & Policy & $\mathbb{E}[IB]$ (\%) $(\pm)$ \\
\midrule
\multirow{4}{*}{20} & WIQL      & -57.9 $\pm$ 13.1 \\
                    & Myopic    & 0.24 $\pm$ 8.2  \\
                    & TW        &  32.6 $\pm$ 7.0 \\
                    & EduQate-   & \textbf{100.0} \\
\midrule
\multirow{4}{*}{30} & WIQL      & -292 $\pm$ 1162 \\
                    & Myopic    & 180 $\pm$ 600 \\
                    & TW        & 122 $\pm$ 277\\
                    & EduQate-   & 100 \\
\midrule
\multirow{4}{*}{40} & WIQL      & 307 $\pm$ 1069\\
                    & Myopic    & 212 $\pm$ 526\\
                    & TW        & 4.34 $\pm$ 1124\\
                    & EduQate-   & 100 \\
\bottomrule
\end{tabular}
\end{sc}
\end{small}
\end{center}
\vskip -0.1in
\end{table}
We present the results for different network setups in Table \ref{table:synthetic-results-full}. We note that as the number of topics approach the number of arms (i.e. $N_{topics} = \{30,40\}$, all algorithms perform in a highly unstable manner, as reflected in the standard deviations presented. We emphasizes here that the performance of EduQate is dependent on the quality of the network it is working on, and tends to thrive in more complex, yet realistic scenarios, such as the Junyi dataset presented in Figure \ref{fig:overall_network}. We present an example of a graph generated when $N_{topics} = 40$ in Figure \ref{fig:bad_network}, where we notice that many arms do not belong to a group. Under this network, the EdNetRMAB can be approximated to a traditional RMAB, where the arms are independent of each other. 
\begin{figure}[h]
    \centering
    \includegraphics[width=0.8\linewidth]{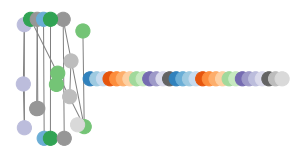}
    \caption{Synthetic network when $N_{topics} = 40$. Note that some arms are without group members, and do not receive benefits from networks. Node colors represent topic groups.}
    \label{fig:bad_network}
\end{figure}

\subsection{Ablation of Replay Buffer}
\label{sec:ablation}
\begin{figure*}[h]
    \centering
    \includegraphics[width=1.0\linewidth]{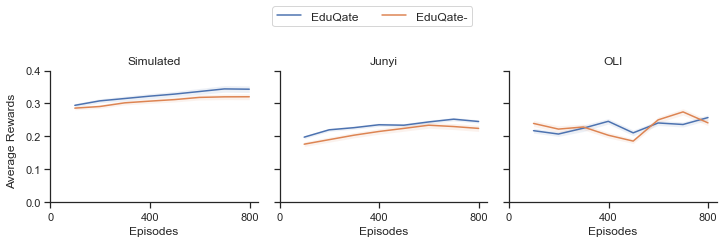}
    \caption{Average rewards across 800 episodes of training, across 30 seeds. EduQate- (orange) refers to the EduQate algorithm without replay buffer.}
    \label{fig:ablation_curve}
\end{figure*}
\begin{table}[H]
\caption{Comparison of EduQate with and without (EduQate-) Experience Replay Buffer policies across different datasets. Results reported are of the final episode of training.}
\label{table:ablation}
\vskip 0.15in
\begin{center}
\begin{small}
\begin{sc}
\begin{tabular}{lccc}
\toprule
\multirow{2}{*}{Policy} & \multicolumn{3}{c}{$\mathbb{E}[IB]$ (\%) $\pm$} \\
\cmidrule(lr){2-4}
 & Synthetic & Junyi & OLI \\
\midrule
EduQate- & 104.74 $\pm$ 32.56 & 76.90 $\pm$ 4.72 & 107.30 $\pm$ 11.77 \\
EduQate & 100.0 & 100.0 & 100.0 \\
\midrule
\multirow{2}{*}{Policy} & \multicolumn{3}{c}{$\mathbb{E}[R]$ $\pm$} \\
\cmidrule(lr){2-4}
 & Synthetic & Junyi & OLI \\
\midrule
EduQate- & 32.032 $\pm$ 0.469 & 22.133 $\pm$ 0.544 & 25.16 $\pm$ 0.432 \\
EduQate & 34.331 $\pm$ 0.489 & 24.527 $\pm$ 0.314 & 25.468 $\pm$ 0.469 \\
\bottomrule
\end{tabular}
\end{sc}
\end{small}
\end{center}
\vskip -0.1in
\end{table}

We investigate the importance of the Experience Replay buffer in EduQate, as shown in Figure \ref{fig:ablation_curve} and Table \ref{table:ablation}. For the Simulated and Junyi datasets, EduQate without Experience Replay (EduQate-) does not achieve the performance levels of the full EduQate algorithm within 800 episodes, highlighting the importance of methods that aid Q-learning convergence. In real-world applications, slow convergence can result in students experiencing a curriculum similar to a random policy, leading to sub-optimal learning experiences during the early stages. This issue is known as the cold-start problem \citep{bassen2020reinforcement}. Future work in EdNetRMABs should explore methods to overcome cold-start problems and improve convergence in Q-learning-based methods.

\section{Q-Learning}
\label{sec:qlearning}
Q-learning \citep{watkins1992q} is a popular reinforcement learning method that enables an agent to learn optimal actions in an environment by iteratively updating its estimate of state-action value, $Q(s,a)$, based on the rewards it receives. The objective, therefore, to learn $Q^*(s,a)$ for each state-action pair of an MDP, given by:

$$Q^*(s,a) = r(s) + \sum_{s' \in {S}} P(s,a,s')\cdot V^*(s')$$

where $V^*(s')$ is the optimal expected value of a state, is given by:

$$V^*(s) = max_{a \in A} (Q(s,a))$$

Q-learning estimates $Q^*$ through repeated interactions with the environment. At each time step $t$, the agent takes an action $a$ using its current estimate of $Q$ values and current state $s$, thus received a reward of $r(s)$ and new state $s'$. Q-learning then updates the current estimate using the following:

\begin{equation}\label{eq:q_update}
\begin{aligned}
  Q_{new}(s,a) & \leftarrow (1 - \alpha)\cdot Q_{old}(s,a)\\ 
  & + \alpha \cdot (r(s) \\
  & + \gamma \cdot max_{a \in A} Q_{old}(s',a))    
\end{aligned}
\end{equation}

where $\alpha \in [0,1]$ is the learning rate that controls updates, and $\gamma$ is the discount on future rewards associated with the MDP.
\section{Experiment Details and Hyperparameters}
\begin{table}[h]
\centering
\begin{tabular}{lcc}
\hline
\textbf{Category} & \textbf{Parameter} & \textbf{Value} \\
\hline
\multirow{2}{*}{Replay buffer} & buffer\_size & 10000 \\
 & batch\_size & 64 \\
\hline
\multirow{2}{*}{WIQL/EduQate} & $\gamma$ & 0.95 \\
 & $\alpha$ & 0.1 \\
\hline
\end{tabular}
\caption{Hyperparameters for Replay Buffer and Q-learning}
\label{tab:hyperparameters}
\end{table}

\newpage
\section{NeurIPS Paper Checklist}

\begin{enumerate}

\item {\bf Claims}
    \item[] Question: Do the main claims made in the abstract and introduction accurately reflect the paper's contributions and scope?
    \item[] Answer: \answerYes{} 
    \item[] Justification: We summarize our contributions and provide the scope of the paper in the abstract and introduction.
    \item[] Guidelines:
    \begin{itemize}
        \item The answer NA means that the abstract and introduction do not include the claims made in the paper.
        \item The abstract and/or introduction should clearly state the claims made, including the contributions made in the paper and important assumptions and limitations. A No or NA answer to this question will not be perceived well by the reviewers. 
        \item The claims made should match theoretical and experimental results, and reflect how much the results can be expected to generalize to other settings. 
        \item It is fine to include aspirational goals as motivation as long as it is clear that these goals are not attained by the paper. 
    \end{itemize}

\item {\bf Limitations}
    \item[] Question: Does the paper discuss the limitations of the work performed by the authors?
    \item[] Answer: \answerYes{}
    \item[] Justification: Limitations were discussed in the final section.
    \item[] Guidelines:
    \begin{itemize}
        \item The answer NA means that the paper has no limitation while the answer No means that the paper has limitations, but those are not discussed in the paper. 
        \item The authors are encouraged to create a separate "Limitations" section in their paper.
        \item The paper should point out any strong assumptions and how robust the results are to violations of these assumptions (e.g., independence assumptions, noiseless settings, model well-specification, asymptotic approximations only holding locally). The authors should reflect on how these assumptions might be violated in practice and what the implications would be.
        \item The authors should reflect on the scope of the claims made, e.g., if the approach was only tested on a few datasets or with a few runs. In general, empirical results often depend on implicit assumptions, which should be articulated.
        \item The authors should reflect on the factors that influence the performance of the approach. For example, a facial recognition algorithm may perform poorly when image resolution is low or images are taken in low lighting. Or a speech-to-text system might not be used reliably to provide closed captions for online lectures because it fails to handle technical jargon.
        \item The authors should discuss the computational efficiency of the proposed algorithms and how they scale with dataset size.
        \item If applicable, the authors should discuss possible limitations of their approach to address problems of privacy and fairness.
        \item While the authors might fear that complete honesty about limitations might be used by reviewers as grounds for rejection, a worse outcome might be that reviewers discover limitations that aren't acknowledged in the paper. The authors should use their best judgment and recognize that individual actions in favor of transparency play an important role in developing norms that preserve the integrity of the community. Reviewers will be specifically instructed to not penalize honesty concerning limitations.
    \end{itemize}

\item {\bf Theory Assumptions and Proofs}
    \item[] Question: For each theoretical result, does the paper provide the full set of assumptions and a complete (and correct) proof?
    \item[] Answer: \answerYes{}
    \item[] Justification: Proofs are provided in Appendix \ref{section:analysis}.
    \item[] Guidelines:
    \begin{itemize}
        \item The answer NA means that the paper does not include theoretical results. 
        \item All the theorems, formulas, and proofs in the paper should be numbered and cross-referenced.
        \item All assumptions should be clearly stated or referenced in the statement of any theorems.
        \item The proofs can either appear in the main paper or the supplemental material, but if they appear in the supplemental material, the authors are encouraged to provide a short proof sketch to provide intuition. 
        \item Inversely, any informal proof provided in the core of the paper should be complemented by formal proofs provided inappendix or supplemental material.
        \item Theorems and Lemmas that the proof relies upon should be properly referenced. 
    \end{itemize}

    \item {\bf Experimental Result Reproducibility}
    \item[] Question: Does the paper fully disclose all the information needed to reproduce the main experimental results of the paper to the extent that it affects the main claims and/or conclusions of the paper (regardless of whether the code and data are provided or not)?
    \item[] Answer: \answerYes{}
    \item[] Justification: Experriment details are provided in both the main body and the appendix.
    \item[] Guidelines:
    \begin{itemize}
        \item The answer NA means that the paper does not include experiments.
        \item If the paper includes experiments, a No answer to this question will not be perceived well by the reviewers: Making the paper reproducible is important, regardless of whether the code and data are provided or not.
        \item If the contribution is a dataset and/or model, the authors should describe the steps taken to make their results reproducible or verifiable. 
        \item Depending on the contribution, reproducibility can be accomplished in various ways. For example, if the contribution is a novel architecture, describing the architecture fully might suffice, or if the contribution is a specific model and empirical evaluation, it may be necessary to either make it possible for others to replicate the model with the same dataset, or provide access to the model. In general. releasing code and data is often one good way to accomplish this, but reproducibility can also be provided via detailed instructions for how to replicate the results, access to a hosted model (e.g., in the case of a large language model), releasing of a model checkpoint, or other means that are appropriate to the research performed.
        \item While NeurIPS does not require releasing code, the conference does require all submissions to provide some reasonable avenue for reproducibility, which may depend on the nature of the contribution. For example
        \begin{enumerate}
            \item If the contribution is primarily a new algorithm, the paper should make it clear how to reproduce that algorithm.
            \item If the contribution is primarily a new model architecture, the paper should describe the architecture clearly and fully.
            \item If the contribution is a new model (e.g., a large language model), then there should either be a way to access this model for reproducing the results or a way to reproduce the model (e.g., with an open-source dataset or instructions for how to construct the dataset).
            \item We recognize that reproducibility may be tricky in some cases, in which case authors are welcome to describe the particular way they provide for reproducibility. In the case of closed-source models, it may be that access to the model is limited in some way (e.g., to registered users), but it should be possible for other researchers to have some path to reproducing or verifying the results.
        \end{enumerate}
    \end{itemize}

\item {\bf Open access to data and code}
    \item[] Question: Does the paper provide open access to the data and code, with sufficient instructions to faithfully reproduce the main experimental results, as described in supplemental material?
    \item[] Answer: \answerYes{}
    \item[] Justification: Code and the transition matrices are provided as supplementary materials.
    \item[] Guidelines:
    \begin{itemize}
        \item The answer NA means that paper does not include experiments requiring code.
        \item Please see the NeurIPS code and data submission guidelines (\url{https://nips.cc/public/guides/CodeSubmissionPolicy}) for more details.
        \item While we encourage the release of code and data, we understand that this might not be possible, so “No” is an acceptable answer. Papers cannot be rejected simply for not including code, unless this is central to the contribution (e.g., for a new open-source benchmark).
        \item The instructions should contain the exact command and environment needed to run to reproduce the results. See the NeurIPS code and data submission guidelines (\url{https://nips.cc/public/guides/CodeSubmissionPolicy}) for more details.
        \item The authors should provide instructions on data access and preparation, including how to access the raw data, preprocessed data, intermediate data, and generated data, etc.
        \item The authors should provide scripts to reproduce all experimental results for the new proposed method and baselines. If only a subset of experiments are reproducible, they should state which ones are omitted from the script and why.
        \item At submission time, to preserve anonymity, the authors should release anonymized versions (if applicable).
        \item Providing as much information as possible in supplemental material (appended to the paper) is recommended, but including URLs to data and code is permitted.
    \end{itemize}

\item {\bf Experimental Setting/Details}
    \item[] Question: Does the paper specify all the training and test details (e.g., data splits, hyperparameters, how they were chosen, type of optimizer, etc.) necessary to understand the results?
    \item[] Answer: \answerYes{}
    \item[] Justification: Relevant details are provided in the main body, as well as the appendix.
    \item[] Guidelines:
    \begin{itemize}
        \item The answer NA means that the paper does not include experiments.
        \item The experimental setting should be presented in the core of the paper to a level of detail that is necessary to appreciate the results and make sense of them.
        \item The full details can be provided either with the code, in appendix, or as supplemental material.
    \end{itemize}

\item {\bf Experiment Statistical Significance}
    \item[] Question: Does the paper report error bars suitably and correctly defined or other appropriate information about the statistical significance of the experiments?
    \item[] Answer: \answerYes{}
    \item[] Justification: In our experiments, we report and display the standard error across all seeds.
    \item[] Guidelines:
    \begin{itemize}
        \item The answer NA means that the paper does not include experiments.
        \item The authors should answer "Yes" if the results are accompanied by error bars, confidence intervals, or statistical significance tests, at least for the experiments that support the main claims of the paper.
        \item The factors of variability that the error bars are capturing should be clearly stated (for example, train/test split, initialization, random drawing of some parameter, or overall run with given experimental conditions).
        \item The method for calculating the error bars should be explained (closed form formula, call to a library function, bootstrap, etc.)
        \item The assumptions made should be given (e.g., Normally distributed errors).
        \item It should be clear whether the error bar is the standard deviation or the standard error of the mean.
        \item It is OK to report 1-sigma error bars, but one should state it. The authors should preferably report a 2-sigma error bar than state that they have a 96\% CI, if the hypothesis of Normality of errors is not verified.
        \item For asymmetric distributions, the authors should be careful not to show in tables or figures symmetric error bars that would yield results that are out of range (e.g. negative error rates).
        \item If error bars are reported in tables or plots, The authors should explain in the text how they were calculated and reference the corresponding figures or tables in the text.
    \end{itemize}

\item {\bf Experiments Compute Resources}
    \item[] Question: For each experiment, does the paper provide sufficient information on the computer resources (type of compute workers, memory, time of execution) needed to reproduce the experiments?
    \item[] Answer: \answerYes{}
    \item[] Justification: The current paper only requires CPU-level of compute and is mentioned in the Experiment section.
    \item[] Guidelines:
    \begin{itemize}
        \item The answer NA means that the paper does not include experiments.
        \item The paper should indicate the type of compute workers CPU or GPU, internal cluster, or cloud provider, including relevant memory and storage.
        \item The paper should provide the amount of compute required for each of the individual experimental runs as well as estimate the total compute. 
        \item The paper should disclose whether the full research project required more compute than the experiments reported in the paper (e.g., preliminary or failed experiments that didn't make it into the paper). 
    \end{itemize}
    
\item {\bf Code Of Ethics}
    \item[] Question: Does the research conducted in the paper conform, in every respect, with the NeurIPS Code of Ethics \url{https://neurips.cc/public/EthicsGuidelines}?
    \item[] Answer: \answerYes{}
    \item[] Justification: All datasets used were anonymized by the respective authors. 
    \item[] Guidelines:
    \begin{itemize}
        \item The answer NA means that the authors have not reviewed the NeurIPS Code of Ethics.
        \item If the authors answer No, they should explain the special circumstances that require a deviation from the Code of Ethics.
        \item The authors should make sure to preserve anonymity (e.g., if there is a special consideration due to laws or regulations in their jurisdiction).
    \end{itemize}

\item {\bf Broader Impacts}
    \item[] Question: Does the paper discuss both potential positive societal impacts and negative societal impacts of the work performed?
    \item[] Answer: \answerYes{}
    \item[] Justification: The current work has positive implications for applied machine learning in education settings, and is discussed in the Introduction section. As far as we can see, we don't think there are negative impacts for education. 
    \item[] Guidelines:
    \begin{itemize}
        \item The answer NA means that there is no societal impact of the work performed.
        \item If the authors answer NA or No, they should explain why their work has no societal impact or why the paper does not address societal impact.
        \item Examples of negative societal impacts include potential malicious or unintended uses (e.g., disinformation, generating fake profiles, surveillance), fairness considerations (e.g., deployment of technologies that could make decisions that unfairly impact specific groups), privacy considerations, and security considerations.
        \item The conference expects that many papers will be foundational research and not tied to particular applications, let alone deployments. However, if there is a direct path to any negative applications, the authors should point it out. For example, it is legitimate to point out that an improvement in the quality of generative models could be used to generate deepfakes for disinformation. On the other hand, it is not needed to point out that a generic algorithm for optimizing neural networks could enable people to train models that generate Deepfakes faster.
        \item The authors should consider possible harms that could arise when the technology is being used as intended and functioning correctly, harms that could arise when the technology is being used as intended but gives incorrect results, and harms following from (intentional or unintentional) misuse of the technology.
        \item If there are negative societal impacts, the authors could also discuss possible mitigation strategies (e.g., gated release of models, providing defenses in addition to attacks, mechanisms for monitoring misuse, mechanisms to monitor how a system learns from feedback over time, improving the efficiency and accessibility of ML).
    \end{itemize}
    
\item {\bf Safeguards}
    \item[] Question: Does the paper describe safeguards that have been put in place for responsible release of data or models that have a high risk for misuse (e.g., pretrained language models, image generators, or scraped datasets)?
    \item[] Answer: \answerNA{}
    \item[] Justification: The current paper does not release any new assets.
    \item[] Guidelines:
    \begin{itemize}
        \item The answer NA means that the paper poses no such risks.
        \item Released models that have a high risk for misuse or dual-use should be released with necessary safeguards to allow for controlled use of the model, for example by requiring that users adhere to usage guidelines or restrictions to access the model or implementing safety filters. 
        \item Datasets that have been scraped from the Internet could pose safety risks. The authors should describe how they avoided releasing unsafe images.
        \item We recognize that providing effective safeguards is challenging, and many papers do not require this, but we encourage authors to take this into account and make a best faith effort.
    \end{itemize}

\item {\bf Licenses for existing assets}
    \item[] Question: Are the creators or original owners of assets (e.g., code, data, models), used in the paper, properly credited and are the license and terms of use explicitly mentioned and properly respected?
    \item[] Answer: \answerYes{}
    \item[] Justification: Code \citep{mate2020collapsing} and datasets \citep{chang2015modeling, bier_2011} were appropriately cited. 
    \item[] Guidelines:
    \begin{itemize}
        \item The answer NA means that the paper does not use existing assets.
        \item The authors should cite the original paper that produced the code package or dataset.
        \item The authors should state which version of the asset is used and, if possible, include a URL.
        \item The name of the license (e.g., CC-BY 4.0) should be included for each asset.
        \item For scraped data from a particular source (e.g., website), the copyright and terms of service of that source should be provided.
        \item If assets are released, the license, copyright information, and terms of use in the package should be provided. For popular datasets, \url{paperswithcode.com/datasets} has curated licenses for some datasets. Their licensing guide can help determine the license of a dataset.
        \item For existing datasets that are re-packaged, both the original license and the license of the derived asset (if it has changed) should be provided.
        \item If this information is not available online, the authors are encouraged to reach out to the asset's creators.
    \end{itemize}

\item {\bf New Assets}
    \item[] Question: Are new assets introduced in the paper well documented and is the documentation provided alongside the assets?
    \item[] Answer: \answerNA{}
    \item[] Justification: \answerNA{}
    \item[] Guidelines:
    \begin{itemize}
        \item The answer NA means that the paper does not release new assets.
        \item Researchers should communicate the details of the dataset/code/model as part of their submissions via structured templates. This includes details about training, license, limitations, etc. 
        \item The paper should discuss whether and how consent was obtained from people whose asset is used.
        \item At submission time, remember to anonymize your assets (if applicable). You can either create an anonymized URL or include an anonymized zip file.
    \end{itemize}

\item {\bf Crowdsourcing and Research with Human Subjects}
    \item[] Question: For crowdsourcing experiments and research with human subjects, does the paper include the full text of instructions given to participants and screenshots, if applicable, as well as details about compensation (if any)? 
    \item[] Answer: \answerNA{}
    \item[] Justification: \answerNA{}
    \item[] Guidelines:
    \begin{itemize}
        \item The answer NA means that the paper does not involve crowdsourcing nor research with human subjects.
        \item Including this information in the supplemental material is fine, but if the main contribution of the paper involves human subjects, then as much detail as possible should be included in the main paper. 
        \item According to the NeurIPS Code of Ethics, workers involved in data collection, curation, or other labor should be paid at least the minimum wage in the country of the data collector. 
    \end{itemize}

\item {\bf Institutional Review Board (IRB) Approvals or Equivalent for Research with Human Subjects}
    \item[] Question: Does the paper describe potential risks incurred by study participants, whether such risks were disclosed to the subjects, and whether Institutional Review Board (IRB) approvals (or an equivalent approval/review based on the requirements of your country or institution) were obtained?
    \item[] Answer: \answerNA{}
    \item[] Justification: \answerNA{}
    \item[] Guidelines:
    \begin{itemize}
        \item The answer NA means that the paper does not involve crowdsourcing nor research with human subjects.
        \item Depending on the country in which research is conducted, IRB approval (or equivalent) may be required for any human subjects research. If you obtained IRB approval, you should clearly state this in the paper. 
        \item We recognize that the procedures for this may vary significantly between institutions and locations, and we expect authors to adhere to the NeurIPS Code of Ethics and the guidelines for their institution. 
        \item For initial submissions, do not include any information that would break anonymity (if applicable), such as the institution conducting the review.
    \end{itemize}

\end{enumerate}

\end{document}